\documentclass[runningheads]{llncs}

\usepackage[FINAL,year=2026,ID=4529]{eccv}

\usepackage{eccvabbrv}

\usepackage{graphicx}
\usepackage{booktabs}

\usepackage[accsupp]{axessibility}  
\usepackage{booktabs}

\usepackage{arydshln}

\newtheorem{assumption}{Assumption}
\usepackage{graphicx}
\usepackage{tabularx}

\usepackage[table]{xcolor}

\usepackage{booktabs}
\usepackage{multirow}
\usepackage{array}
\usepackage{color}
\usepackage{colortbl} 
\usepackage{siunitx}  
\usepackage{wrapfig}
\usepackage[table]{xcolor}
\usepackage{arydshln}
\usepackage[utf8]{inputenc}
\usepackage{pifont}
\usepackage{algorithm}
\usepackage{algorithmic}
\DeclareUnicodeCharacter{2717}{\ding{55}}
\definecolor{graybg}{gray}{0.95}
\usepackage{xcolor}
\usepackage{adjustbox}
\usepackage[most]{tcolorbox}
\DeclareUnicodeCharacter{2713}{\ding{51}}
\usepackage{xcolor}
\definecolor{LightPink}{RGB}{255, 182, 193}
\newcolumntype{C}{>{\centering\arraybackslash}X}

\usepackage{hyperref}

\usepackage{orcidlink}

\usepackage{makecell}
\begin{document}

\title{Geometric Regularization for Long-Tailed Semi-Supervised Learning via Gaussian Feature Bridges
} 


\titlerunning{Geometric Regularization via Gaussian Feature Bridges}

\author{
Hongyang He\inst{1,2}\orcidlink{0009-0001-4889-8118}
\textsuperscript{\ensuremath{\dagger}} \and
Xinyuan Song\inst{3}\orcidlink{0009-0005-4209-5671}
\textsuperscript{\ensuremath{\dagger}} \and
Yan Zhong\inst{4}\orcidlink{0000-0003-0005-2620} \and
Daizong Liu\inst{2}\orcidlink{0000-0001-8179-4508}\textsuperscript{*} \and
Yanbin Li\inst{4}\orcidlink{0009-0009-7501-9754} \and
Yang-fan He\orcidlink{0000-0001-7172-9403} \and
Wenqiao Zhang\inst{5}\orcidlink{0000-0002-5988-7609}
}

\authorrunning{H.~He et al.}

\institute{
University of Warwick, Coventry, United Kingdom \and
Wuhan University, Wuhan, China \and
Emory University, Atlanta, GA, USA \and
Peking University, Beijing, China \and
Zhejiang University, Hangzhou, China
}

\maketitle
\begingroup
\renewcommand{\thefootnote}{}
\footnotetext{
This work was completed during an internship at Maniforda.Ai.
}
\footnotetext{
\textsuperscript{\ensuremath{\dagger}} Hongyang He and Xinyuan Song contributed equally to this work.
}
\footnotetext{
\textsuperscript{*} Corresponding author: Daizong Liu.
}
\endgroup

\begin{abstract}
Real-world semi-supervised learning (SSL) often encounters significant challenges with long-tailed label distributions and noisy pseudo-labels, which hinder generalization and amplify confirmation bias. In this work, we introduce a novel framework, Gaussian Bridge Consistency (GBC), to address these challenges by constructing semantic interpolation paths between unlabeled samples and high-quality class anchors. Our method maintains a dynamic Prototype Atlas that stores a diverse and evolving set of labeled and pseudo-labeled exemplars per class. For each unlabeled instance, GBC forms a class-conditional Gaussian Feature Bridge in the latent space, enabling the student model to traverse a smooth trajectory from uncertain predictions to reliable class prototypes. A bridge consistency loss is applied along this path to enforce alignment with a geometrically interpolated target distribution. Furthermore, we propose BridgeMix, a confidence-aware feature mixing strategy that interpolates both sample and anchor pairs to amplify cross-sample generalization. Extensive experiments on CIFAR10-LT and ImageNet-LT (USB benchmarks) validate the robustness and effectiveness of GBC under realistic long-tailed SSL settings, consistently improving long tail-class performance without sacrificing scalability.
\keywords{Semi-supervised learning \and Long-Tailed Classification }
\end{abstract}

\section{Introduction}
\label{sec:intro}

Semi-supervised learning (SSL) has emerged as a promising paradigm to reduce annotation costs by leveraging large-scale unlabeled data alongside a small labeled subset \cite{A1,A2,A3,A4,A5}. Despite substantial progress, existing SSL methods often fall short in realistic scenarios characterized by severe class imbalance, long-tailed distributions, and noisy pseudo-labels \cite{A7,A8,A9,A10}. These challenges introduce significant confirmation bias, where dominant classes monopolize learning signals while minority (tail) classes suffer from undertraining and semantic drift \cite{A11,A12,A13}. This calls for a new perspective that explicitly models the uncertainty and transition dynamics between weakly predicted unlabeled instances and reliable class representations. \textbf{Our key motivation} stems from the question:
\begin{center}
\begin{tcolorbox}[colback=gray!5,colframe=gray!70!black,width=0.95\linewidth,boxrule=0.5pt,arc=2pt]
\centering
\emph{Can we transform sparse unlabeled features into reliable supervision signals by constructing smooth semantic pathways?}
\end{tcolorbox}
\end{center}

Inspired by the Schr\"odinger Bridge theory—a probabilistic framework \cite{A25} that interpolates between two distributions via optimal stochastic processes \cite{A14,A15,A16,A17}—we propose to treat labeled and pseudo-labeled class exemplars as endpoints of a semantic bridge, and encourage a student model to learn consistently along this interpolated path.

In this paper, we present a novel framework called \textbf{Gaussian Bridge Consistency (GBC)}, which introduces a simple yet efficient semantic interpolation mechanism into the SSL pipeline. Central to GBC is the notion of a \emph{Gaussian Feature Bridge}, a latent-space path that connects an unlabeled sample to a class-specific anchor retrieved from a dynamic Prototype Atlas (PA) \cite{A13,A18,A19,A21}. At multiple intermediate points along the bridge, we inject actionable supervision by enforcing bridge-consistency loss, which aligns the student's prediction with a soft geometric target interpolated between the sample and anchor distributions. This design encourages the model to traverse from uncertain to reliable features in a controllable manner, thereby reducing semantic drift and enhancing long tail-class robustness \cite{A22}.

To further improve the supervision signal and sample diversity, we introduce \textbf{BridgeMix}, a novel confidence-aware variant of MixUp. By interpolating not only features but also their corresponding anchors using pseudo-label confidence as a guidance weight, BridgeMix constructs smoothed bridge paths that exploit high-confidence samples to guide lower-confidence ones—all within the original loss formulation, with no additional training complexity \cite{A23,A24,A11,A12,A51,A52,A53,A54,A55,A56,A57,A58,A59,A60,A61,A62,A63,A64,A65,A66,A67}.

Extensive experiments on CIFAR10-LT, CIFAR100-LT, STL10-LT, and ImageNetLT benchmarks demonstrate that GBC significantly mitigates the performance collapse typical in realistic SSL conditions. By transforming uncertain feature states into reliable class-consistent representations, our framework enhances tail-class accuracy while maintaining compatibility across diverse architectures. Our core contributions are summarized as follows:

First, we identify the fundamental challenge in long-tailed semi-supervised learning (LTSSL) as the instability of feature alignment caused by biased pseudo-labels, which often drift toward majority classes. To address this, we propose GBC, a Schrödinger Bridge-inspired framework that regularizes the learning process by constructing class-conditional interpolation paths in latent space. This mechanism pulls noisy, strongly augmented unlabeled features toward reliable anchors stored in a dynamically updated Prototype Atlas (PA), ensuring stable semantic alignment even for data-scarce tail categories.

Second, we introduce BridgeMix, a confidence-guided manifold regularization strategy designed to strengthen bridge-level supervision. By employing a dynamic mixing coefficient derived from pseudo-label certainties, BridgeMix enables highly confident samples and anchors to guide the representation learning of uncertain ones. This process effectively regularizes the model's behavior across the sample-anchor trajectory and enhances the diversity of the bridge manifold without necessitating modifications to the underlying loss design.

Third, we establish a rigorous theoretical foundation for bridge consistency by demonstrating its role as an intrinsic geometric regularizer. We formally prove that GBC enforces geometric smoothness and local Lipschitz continuity on the decision boundary, which stabilizes the model against feature-level perturbations. Furthermore, we derive a generalization bound for BridgeMix, showing that it effectively tightens the true risk by reducing hypothesis complexity and minimizing the distributional discrepancy between mixed and empirical distributions.

Finally, our results demonstrate consistent and significant gains over prior state-of-the-art methods, establishing GBC as a robust, scalable, and mathematically grounded solution for semi-supervised learning under severe long-tailed conditions.

\section{Related Works}

\textbf{Semi-Supervised Learning.}
Classic SSL techniques rely heavily on pseudo-labeling \cite{A5,A3} and consistency regularization \cite{A26}, which aim to enhance generalization by enforcing prediction invariance across augmentations.
However, most SSL methods assume class-balanced labeled/unlabeled data, which often does not hold in practice.
As a result, pseudo-labeling can be error-prone and biased toward head classes, especially under low-label or long-tailed settings \cite{A10,A9}.
Recent efforts address this via class-balancing strategies such as reweighting, calibration, or pseudo-label refinement \cite{A10,A9}, but they are typically applied at the output/logit level and lack intermediate feature-level alignment.

\textbf{Long-Tailed SSL.}
To tackle the real-world challenges of class imbalance and distribution mismatch, Long-Tailed SSL (LTSSL) methods have been proposed.
Notably, ACR \cite{A8} introduces an adaptive consistency regularizer to refine pseudo-labels based on estimated class distributions.
Subsequent works improve robustness through weighting, debiasing, and expertization under the long-tailed regime, including SAW \cite{A28}, Adsh \cite{A29}, DePL \cite{A30}, BaCon \cite{A31}, CPE \cite{A32}, and SimPro \cite{A13}, which represents a powerful SOTA framework for Realistic LTSSL (ReaLTSSL).
Meta-Expert \cite{A33} goes beyond single/expert-ensemble heuristics via dynamic expert assignment and multi-depth feature fusion to mitigate head bias and reduce pseudo-label error.

\textbf{Interpolation-Based Learning and Consistency.}
Feature-level interpolation techniques such as MixUp \cite{A23}, Manifold MixUp \cite{A24} and ProbPseudo Mixup \cite{A27} have shown promise in regularizing training signals, and are often combined with pseudo-labeling/consistency objectives \cite{A3,A26,A5}.
However, these approaches still lack an explicit mechanism to align the semantic structure of features over uncertain unlabeled samples.
Our work bridges this gap by constructing a Gaussian path between sample–anchor pairs, enabling a student model to learn consistent predictions along a semantically meaningful trajectory in feature space.

\begin{figure*}[t]
\centering
\includegraphics[width=1\linewidth]{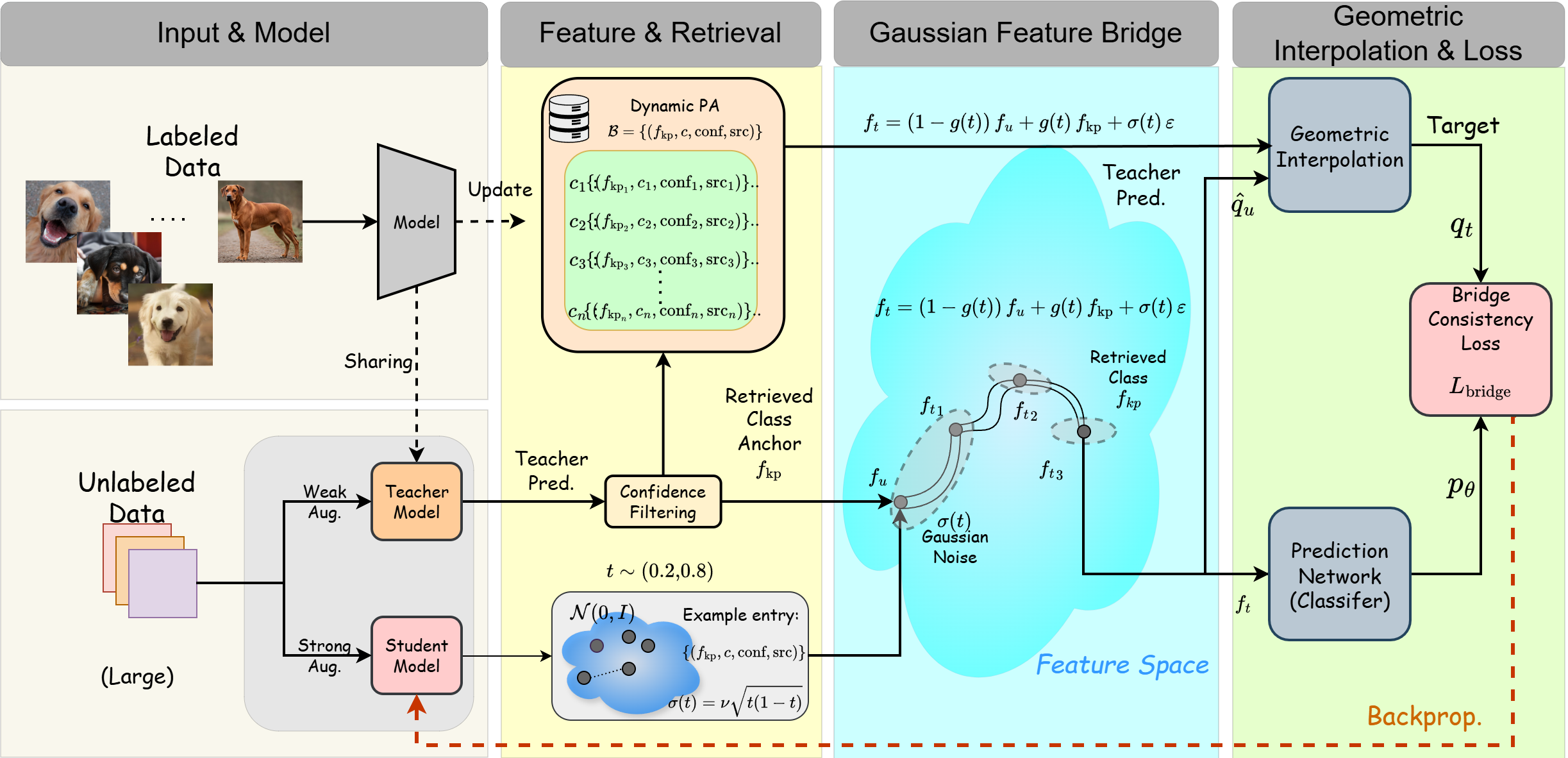}
\caption{
\textbf{Overview of our proposed framework.} 
}
\label{fig:framework}
\vspace{-1.2em}
\end{figure*}

\section{Method}
\label{sec:method}

\noindent 
\textbf{Framework Overview.} In ReaLTSSL, the labeled and unlabeled data often follow different and unknown class distributions, violating the standard i.i.d.\ assumption and causing instability in pseudo-label learning. 
To address this, we propose GBC, a probabilistic framework that regularizes representation alignment through class-conditional Gaussian feature bridges. 
As illustrated in Fig.~\ref{fig:framework}, GBC constructs smooth semantic paths between unlabeled samples and their class anchors from a PA, enforcing bridge-consistency along these paths to achieve stable and geometry-aware learning under long-tailed and noisy conditions.

\subsection{Problem Formulation}
We consider a standard SSL setup with a labeled dataset 
$\mathcal{D}_\ell=\{(x_i,y_i)\}_{i=1}^N$ and a larger unlabeled dataset 
$\mathcal{D}_u=\{x_j\}_{j=1}^M$, where $x \in \mathbb{R}^d$ and $y \in \{1,\dots,K\}$ denote input data and class labels, respectively. 
The goal is to train a classifier $F_\theta:\mathbb{R}^d \mapsto \Delta^K$, parameterized by $\theta$, that performs well on a balanced test distribution. 
Unlike conventional SSL, we emphasize a more realistic scenario where the pseudo-labels of $\mathcal{D}_u$ are noisy and class-imbalanced, which leads to drift toward majority classes if uncorrected. 
To address this, our Schrödinger Bridge-inspired GBC framework constructs class-conditional interpolation paths in the feature space, enforcing consistency along these bridges to stabilize learning in both head and tail categories.

\subsection{Proposed GBC}

\textbf{PA Selection and Maintenance.}
To provide reliable anchors for bridging, we maintain a class-indexed {PA}, $\mathcal{B}=\{(f_{\text{kp}},c,\text{conf},\text{src})\}$. Each entry consists of a feature $f_{\text{kp}}$, its class $c$, confidence, and source (labeled or pseudo-labeled). The PA is initialized with labeled exemplars and dynamically updated with high-confidence pseudo-anchors from a teacher model. We restrict the PA size to $2\%\sim 8\%$ of the dataset to ensure quality and diversity, enforcing cosine distance thresholds to avoid redundancy and applying temporal decay to gradually discard stale pseudo-anchors. If a class temporarily lacks anchors, we revert to an EMA-based prototype as fallback. This design ensures that both head and tail classes are consistently represented in the atlas.

\textbf{Gaussian Feature Bridging in Latent Space.}
Given an unlabeled input $x_u$, the teacher (weak augmentation) predicts a distribution $\hat{q}_u$. If the confidence exceeds a class-dependent threshold $\tau_c$, we query the PA for a same-class anchor $f_{\text{kp}}$. At the student’s target layer, we construct a Gaussian Bridge in the feature space:
\begin{equation}\label{eq:bridge}
f_t = (1-g(t))\,f_u + g(t)\,f_{\text{kp}} + \sigma(t)\,\varepsilon, \quad g(t)=t,
\end{equation}
where $\sigma(t)=\nu \sqrt{t(1-t)}$ is the noise schedule, $\varepsilon \sim \mathcal{N}(0,I)$, and 0. truncated to $[0.2,0.8]$. Defining $g(t)=t$ ensures the semantic path is monotonically increasing, satisfying our theoretical convergence and continuity assumptions. 
Specifically, we define the bridge path via an interpolation function $g(t)$. Following the theoretical requirements for smooth semantic transitions (see assume.~\ref{assum:monotonic}), we employ $g(t)=t$ to represent the constant-speed trajectory from the sample feature to the class anchor.

\textbf{Bridging and Forward Propagation.}
Once the Gaussian feature bridge point $f_t$ is obtained, we incorporate it into the student’s forward computation through a lightweight \emph{bridging} step. Rather than directly replacing the original feature, we softly merge $f_t$ into the student's hidden representation $f_{\text{stu}}$ using a residual-style update:
\begin{equation}\label{eq:fusion}
\tilde{f} = f_{\text{stu}} + \omega(t)\,P(f_t - f_{\text{stu}}), \quad \omega(t)=4t(1-t),
\end{equation}
where $P(\cdot)$ is a projection module (e.g., a $1\times1$ linear layer or a two-layer MLP) ensuring dimensional compatibility. The weighting coefficient $\omega(t)$ acts as a gate that emphasizes mid-bridge states where semantic uncertainty is highest. The resulting bridged feature $\tilde{f}$ is then propagated through the remainder of the student network to produce a prediction $p_\theta(x_u^{\text{strong}};\tilde{f})$.

\textbf{Bridge Consistency Objective.}
To turn the bridged feature into effective supervision, we construct a soft target $q_t$ via geometric interpolation:
\begin{equation}\label{eq:target}
q_t = \mathrm{softmax}\!\big((1-t)\log \hat{q}_u + t\log \hat{q}_{\text{kp}}\big),
\end{equation}
which ensures intermediate states remain consistent with both endpoints on the probability simplex. Given the student’s prediction at the bridged state, $p_\theta(x_u^{\text{strong}};\tilde{f})$, we define the bridge consistency loss as:
\begin{equation}\label{eq:loss}
L_{\text{bridge}} = \mathbb{E}_{x_u, t}\!\left[w_c \cdot \omega(t) \cdot \mathrm{KL}\!\left(q_t \,\|\, p_\theta\right)\right],
\end{equation}
where $w_c \propto (\bar{f}/f_c)^\gamma$ provides class-aware weighting to emphasize tail categories. 
Here, $f_c$ denotes the number of labeled samples in class $c$, $\bar{f}=\frac{1}{C}\sum_{c=1}^{C} f_c$ is the mean class frequency over all $C$ classes, and $\gamma \ge 0$ controls the strength of reweighting (with $\gamma=0$ corresponding to no reweighting).
This objective enforces that the student remains consistent along the semantic path from unlabeled features to reliable anchors, thereby smoothing the decision boundary and mitigating pseudo-label drift.

\subsection{BridgeMix}
\label{sec:bridgemix}

To further enhance the interpolation process on the bridge path, we introduce a novel variant of MixUp tailored for our GBC framework, termed BridgeMix. Given two unlabeled samples $(x_i, x_j)$ with their pseudo-label confidences $(o_i, o_j)$ and corresponding feature-anchor pairs $(f_i, f^\text{anchor}_i)$ and $(f_j, f^\text{anchor}_j)$, we compute a confidence-guided interpolation coefficient:
\begin{equation}
\lambda_i = \frac{o_i}{o_i + o_j}.
\end{equation}
This coefficient ensures that more confident pseudo-labels guide the less certain ones, effectively reducing label noise and mitigating confirmation bias during the feature-level mixing process. We then construct the interpolated sample feature and anchor feature as:
\begin{equation}
f_u^{\text{mix}} = \lambda_i f_i + (1 - \lambda_i) f_j,\quad
f^\text{anchor}_{\text{mix}} = \lambda_i f^\text{anchor}_i + (1 - \lambda_i) f^\text{anchor}_j.
\end{equation}
Following the Gaussian bridge formulation, we generate the intermediate state as:
\begin{equation}
f_t = (1 - g(t)) f_u^{\text{mix}} + g(t) f^\text{anchor}_{\text{mix}},
\end{equation}
where $g(t) = t$ controls the semantic trajectory, and the weighting coefficient $\omega(t) = 4t(1 - t)$ is used to emphasize mid-bridge states during training. This strategy preserves the semantic direction from sample to anchor while allowing smoother trajectories and greater tolerance to label noise. 

Crucially, when BridgeMix is enabled, the target distribution $q_t^{\text{mix}}$ is generated using the same geometric interpolation as the standard GBC:
\begin{equation}\label{eq:bridgemix_target}
q_t^{\text{mix}} = \mathrm{softmax}\!\big((1-t)\log \hat{q}_u^{\text{mix}} + t\log \hat{q}_{\text{kp}}^{\text{mix}}\big).
\end{equation}
where $\hat{q}_u^{\text{mix}}$ and $\hat{q}_{\text{kp}}^{\text{mix}}$ are the mixed sample pseudo-labels and anchor distributions, respectively. To ensure numerical stability during the logarithmic operations in Eq.~\eqref{eq:bridgemix_target}, particularly when dealing with one-hot hard targets or sharp distributions, we apply a small smoothing epsilon ($\epsilon=10^{-6}$) to all target distributions. This refinement prevents undefined gradients and allows GBC to seamlessly integrate both soft pseudo-labels and ground-truth labeled anchors into the bridge manifold.


\begin{wrapfigure}{l}{0.5\textwidth}
  \centering
  \vspace{-10pt}
  \includegraphics[width=0.48\textwidth]{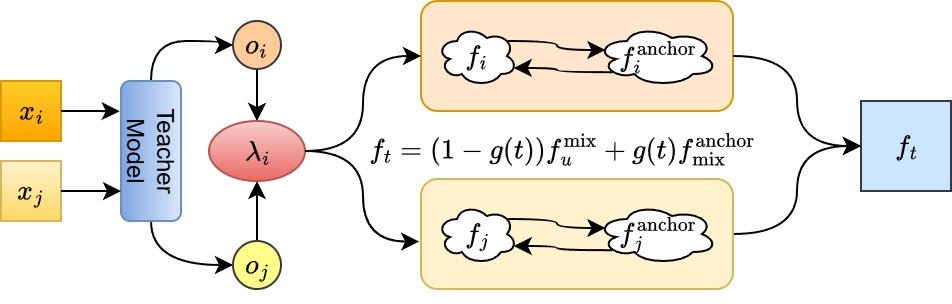}
  \caption{\textbf{Overview of BridgeMix.} Confidence-guided weights $\lambda_i$ facilitate smooth semantic transitions between paired unlabeled samples and their anchors.}
  \label{fig:bridgemix}
  \vspace{-10pt}
\end{wrapfigure}

\subsection{Final Learning Objective}
The overall learning objective integrates bridge consistency with standard supervised and unsupervised terms:
\begin{equation}
L = L_{\text{sup}} + \mu L_{\text{unsup}} + \beta L_{\text{bridge}},
\end{equation}
where $L_{\text{sup}}$ is the cross-entropy loss on labeled data, $L_{\text{unsup}}$ is the standard consistency-based loss on unlabeled data, and $L_{\text{bridge}}$ is our proposed bridge consistency objective. The hyperparameters $\mu$ and $\beta$ balance the contributions of unsupervised learning and bridging, with $\beta$ linearly warmed up in the early epochs to stabilize training. This unified objective creates a concise ``bridge $\rightarrow$ bridging $\rightarrow$ match'' loop, effectively leveraging both labeled anchors and unlabeled samples to achieve robust and realistic SSL.

\section{Theoretical Analysis}

We provide a theoretical analysis to explain why the proposed GBC framework and BridgeMix enhance robustness and generalization in SSL. At the feature level, GBC regularizes learning by enforcing smooth transitions from uncertain unlabeled features to reliable class anchors, yielding continuous predictions and stable decision boundaries. At the distribution level, BridgeMix reduces hypothesis complexity and distributional discrepancy through confidence-guided interpolation between samples and anchors. Together, these results offer a unified theoretical foundation for the stability and generalization of our GBC framework under noisy and long-tailed settings.

\begin{assumption}[Bridge Consistency Setting]
\label{assump:bridge-setting}
Let $f_u \in \mathbb{R}^d$ be the feature of an unlabeled sample and $f_a \in \mathbb{R}^d$ the corresponding class anchor from the PA. A Gaussian bridge path is defined in the feature space according to Eq.~\eqref{eq:bridge}:
\begin{equation}
f_t = (1 - g(t)) f_u + g(t) f_a + \epsilon_t, \quad g(t)=t,
\end{equation}
where $\epsilon_t \sim \mathcal{N}(0, \sigma(t)^2 I)$, $\sigma(t) = \nu \sqrt{t(1-t)}$, and $g(t)$ is a monotonically increasing bridge function with $g(0)=0, g(1)=1$. The student model predicts $p_\theta(f_t)$, while the geometric target interpolation $q_t$ follows Eq.~\eqref{eq:target}. The bridge consistency loss is defined in Eq.~\eqref{eq:loss} incorporating the weighting coefficient $\omega(t)=4t(1-t)$:
\begin{equation}
L_{\text{bridge}} = \mathbb{E}_{t} \left[\omega(t) \cdot \mathrm{KL} \big(q_t \| p_\theta(f_t)\big)\right].
\end{equation}
Throughout the analysis, we assume that $p_\theta(\cdot)$ and $q_t$ are continuously differentiable in $f_t$ and $t$.
\end{assumption}

\begin{assumption}[Monotonicity of Semantic Trajectory]
\label{assum:monotonic}
The function $g(t): [0,1] \to [0,1]$ is a monotonically increasing function such that $g(0)=0$ and $g(1)=1$, representing the continuous semantic transition from the unlabeled sample to the class anchor.
\end{assumption}

\begin{remark}
While our theoretical analysis holds for any $g(t)$ satisfying Assumption~\ref{assum:monotonic}, in our practical implementation, we simply set $g(t)=t$. This choice not only facilitates a linear semantic interpolation in the feature space but also ensures strict theoretical compliance with the Lipschitz continuity and convergence guarantees derived below.
\end{remark}

\begin{lemma}[Pathwise Convergence]
\label{lemma:pathwise}
Under Assumption~\ref{assump:bridge-setting}, if $L_{\text{bridge}} \to 0$, then there exists a continuous trajectory $\bar{p}(t)$ such that
\begin{equation}
\|p_\theta(f_t) - \bar{p}(t)\|_1 \to 0, \quad \forall t\in[0,1],
\end{equation}
with boundary conditions $\bar{p}(0)=\hat{q}_u$ and $\bar{p}(1)=\hat{q}_a$.
\end{lemma}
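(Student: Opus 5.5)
The natural candidate is $\bar p(t) := q_t$, the geometric interpolation of Eq.~\eqref{eq:target}. The plan is to show that $L_{\text{bridge}}\to 0$ forces $p_\theta(f_t)$ to approach $q_t$ for each $t$, and then to read off continuity and the boundary values of $q_t$ directly from its definition.

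First, continuity and boundary values. With the $\epsilon$-smoothing, $\hat q_u$ and $\hat q_{\text{kp}}=\hat q_a$ lie in the interior of the simplex, so $\log\hat q_u$ and $\log\hat q_a$ are finite. Hence $t\mapsto (1-t)\log\hat q_u + t\log\hat q_a$ is affine, and composing with the smooth map $\mathrm{softmax}$ gives a $C^\infty$ curve in $\Delta^K$. At $t=0$ we get $\mathrm{softmax}(\log\hat q_u)=\hat q_u$, since $\hat q_u$ is already normalized, and likewise $q_1=\hat q_a$. This settles the existence and regularity part.

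Second, the convergence from the loss. Set $h_\theta(t):=\mathbb{E}_{\varepsilon}\,\mathrm{KL}(q_t\|p_\theta(f_t))\ge 0$, so that $L_{\text{bridge}}=\int_0^1\omega(t)h_\theta(t)\,dt$ (or the analogous integral for the truncated sampling density). If $L_{\text{bridge}}\to 0$, then $\omega h_\theta\to 0$ in $L^1$. To upgrade this to pointwise convergence, I would use the standing assumption that $p_\theta$ and $q_t$ are continuously differentiable in $(f_t,t)$, strengthened to a uniform (equicontinuity) bound along the sequence of $\theta$. Under that bound, $h_\theta$ is uniformly Lipschitz in $t$, and a nonnegative, uniformly Lipschitz function with vanishing weighted integral must vanish uniformly on any compact set where $\omega>0$, that is, on $(0,1)$. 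Pinsker's inequality, $\|q_t-p\|_1\le\sqrt{2\,\mathrm{KL}(q_t\|p)}$, combined with Jensen's inequality over $\varepsilon$, then gives $\mathbb{E}_\varepsilon\|p_\theta(f_t)-q_t\|_1\to 0$.

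Third, the endpoints. At $t\in\{0,1\}$ we have $\sigma(t)=0$, so $f_t$ is deterministic, and the estimate extends to the closed interval by continuity in $t$. This needs only the equicontinuity from the second step, which handles the fact that $\omega(0)=\omega(1)=0$.

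\textbf{Main obstacle.} The argument rests on the passage from ``$L_{\text{bridge}}\to 0$'' to ``for all $t$''. An integral condition only yields convergence in measure in $t$ and in probability over the noise $\varepsilon$. Pointwise convergence for every $t$, including the endpoints where $\omega$ vanishes and in the presence of stochastic $f_t$, requires uniform regularity of the family $\{p_\theta\}$. I would state this as an explicit strengthening of Assumption~\ref{assump:bridge-setting}, namely a uniform Lipschitz constant in $(f,t)$ along training, and interpret the conclusion either in expectation over $\varepsilon$ or almost surely. If the uniformity is not assumed, the honest fallback is to prove the statement for almost every $t$ via a subsequence argument.
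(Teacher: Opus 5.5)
Your proof is correct for an appropriately strengthened statement, and it shares the paper's skeleton. The paper also sets $\bar p(t):=q_t$, applies Pinsker to $\mathrm{KL}(q_t\|p_\theta(f_t))$, and reads the boundary values off $q_0=\hat q_u$, $q_1=\hat q_a$. The two routes split at the passage from an integral over $t$ to every $t$. The paper uses that $L^1$ convergence in $t$ yields a subsequence $\theta_n$ converging for almost every $t$. It then asserts that ``redefining on a null set'' gives convergence for all $t$. That step is not valid: redefining $\bar p$ on a null set cannot make a non-convergent sequence $p_{\theta_n}(f_t)$ converge, and it would break the continuity of $\bar p$ anyway. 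So what the paper actually proves is your fallback. Your equicontinuity hypothesis is what makes ``for all $t$'' true, and it gives more than the paper's route: convergence along the full sequence rather than a subsequence, uniform convergence on compact subsets of $(0,1)$, and control at the endpoints where $\omega$ vanishes. You also treat two things the paper ignores by treating $f_t$ as a deterministic curve: the $\epsilon$-smoothing (without which $q_t$ is undefined when $\hat q_u$ and $\hat q_a$ have disjoint supports) and the Gaussian noise.

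A few refinements.
\begin{itemize}
\item \emph{Sampling density.} The loss only controls the region where $\omega\rho>0$, with $\rho$ the sampling density of $t$. Under the implemented truncation to $[0.2,0.8]$, no argument reaches $t\notin[0.2,0.8]$. You therefore need $\rho>0$ on $(0,1)$, which the paper also assumes tacitly.
\item \emph{Order of steps.} Apply Pinsker and Jensen \emph{before} the equicontinuity step. For $g_\theta(t)=\mathbb{E}_\varepsilon\|p_\theta(f_t)-q_t\|_1\in[0,2]$, Cauchy--Schwarz gives $\int_0^1\omega\rho\,g_\theta\,dt\le(\int_0^1\omega\rho\,dt)^{1/2}(2L_{\text{bridge}})^{1/2}$. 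This way you never need control of $\log p_\theta$, which equicontinuity of the KL integrand would require.
\item \emph{Interior points.} On compact subsets of $(0,1)$ you need no assumption on $p_\theta$ at all. Let $\phi_t$ be the density of $\mathcal{N}((1-t)f_u+tf_a,\sigma(t)^2I)$. Then $|g_\theta(t)-g_\theta(s)|\le 2\|\phi_t-\phi_s\|_{L^1}+\|q_t-q_s\|_1$ uniformly in $\theta$, and the right side is small for $|t-s|$ small because $\sigma>0$ there.
\item \emph{Endpoints.} Your strengthening is thus needed only at $t\in\{0,1\}$, where $f_t$ is deterministic. There, equicontinuity of $\{p_\theta\}$ at the two points $f_u$ and $f_a$ suffices.
\item \emph{Regularity of $\sigma$.} Since $\sigma(t)=\nu\sqrt{t(1-t)}$ is only H\"older-$1/2$ at the endpoints, ``uniformly Lipschitz in $t$'' should read ``equicontinuous''. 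That is all your argument uses.
\end{itemize}
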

Hence, $\{p_\theta(f_t)\}_{t\in[0,1]}$ converges to a smooth semantic path between the uncertain state $p_\theta(f_u) \sim \hat{q}_u$ and the reliable anchor state $p_\theta(f_a) \sim \hat{q}_a$.

\begin{lemma}[Local Lipschitz Continuity]
\label{lemma:lipschitz}
Assume $\phi_\theta$ satisfies $\|\nabla_{f}\log p_\theta(f)\|_2 \le L$ for some constant $L>0$ along the bridge path. Then for any perturbation $\delta f$ with $\|\delta f\|_2 \le \delta$, one has
\begin{equation}
\|p_\theta(f_t+\delta f) - p_\theta(f_t)\|_1 \le L \delta + \mathcal{O}({L_{\text{bridge}}}^{1/2}),
\end{equation}
\end{lemma}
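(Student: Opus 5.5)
The plan is to split the difference $p_\theta(f_t+\delta f)-p_\theta(f_t)$ into a smoothness part, controlled by the log-gradient bound, and a residual controlled by the bridge loss through Lemma~\ref{lemma:pathwise} and Pinsker's inequality.

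\textbf{Step 1 (smoothness term).} Write $p=p_\theta(f_t)$ and $p'=p_\theta(f_t+\delta f)$. Along the segment $f_s=f_t+s\,\delta f$, $s\in[0,1]$, differentiate each coordinate:
\begin{equation*}
\tfrac{d}{ds}p_\theta(f_s)_k = p_\theta(f_s)_k\,\nabla_f\log p_\theta(f_s)_k^{\top}\delta f .
\end{equation*}
Sum absolute values over $k$. Since $\sum_k p_\theta(f_s)_k=1$ and $\|\nabla_f\log p_\theta\|_2\le L$, Cauchy--Schwarz gives $\|\tfrac{d}{ds}p_\theta(f_s)\|_1\le L\delta$. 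Integrating over $s$ yields $\|p'-p\|_1\le L\delta$.

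Here I read the hypothesis as a bound on every class-coordinate log-gradient. The assumption must also hold on a $\delta$-neighbourhood of the bridge path, not only on the path itself. I would state this explicitly, reading ``along the bridge path'' as a tube of radius $\delta$.

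\textbf{Step 2 (bridge-loss term).} If the constant-$L$ bound is only trusted near the target, not at the perturbed point itself, route the comparison through $q_t$:
\begin{equation*}
\|p'-p\|_1\le \|p'-q_t\|_1+\|q_t-p\|_1 .
\end{equation*}
By Pinsker, $\|q_t-p\|_1\le\sqrt{2\,\mathrm{KL}(q_t\|p)}$. Lemma~\ref{lemma:pathwise} identifies the limiting trajectory $\bar p(t)=q_t$ in the regime $L_{\text{bridge}}\to0$, and the perturbed side is compared to $q_t$ through Step 1 applied near $q_t$. The remaining task is to turn the pointwise KL into $L_{\text{bridge}}$.

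\textbf{Step 3 (main obstacle).} $L_{\text{bridge}}$ controls only the $\omega$-weighted expectation over $t$, not each $t$ separately. I would handle this in three moves:
\begin{itemize}
\item[(i)] Use $t\in[0.2,0.8]$, where $\omega(t)\ge 0.64$, so $\mathbb{E}_t\,\mathrm{KL}\le L_{\text{bridge}}/0.64$.
\item[(ii)] Use the assumed continuous differentiability in $t$ (Assumption~\ref{assump:bridge-setting}) to get a uniform modulus of continuity of $t\mapsto \mathrm{KL}(q_t\|p_\theta(f_t))$. A function with bounded derivative and small integral is uniformly small, up to a constant depending on that derivative bound. This converts the averaged bound into a pointwise $\mathrm{KL}(q_t\|p_\theta(f_t))\le C\,L_{\text{bridge}}$.
\item[(iii)] Handle the noise $\epsilon_t$ by either conditioning on it or absorbing its Gaussian expectation into the constant.
\end{itemize}
Combining with Pinsker gives the $\mathcal{O}(L_{\text{bridge}}^{1/2})$ term, and adding Step 1 gives the stated bound. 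This averaged-to-pointwise step is where constants are least controlled. Strictly, a Lipschitz-in-$t$ argument gives $L_{\text{bridge}}^{1/2}$ for the KL only under a derivative bound, which would make the final residual $L_{\text{bridge}}^{1/4}$. So I would first try assuming a uniform bound via continuity and compactness, and otherwise state the result in expectation over $t$.
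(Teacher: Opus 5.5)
Your proposal is correct and is essentially the paper's argument. You integrate along $f_t+s\,\delta f$ using the log-gradient bound to get $L\delta$ (the paper via a covariance form of the softmax Jacobian, you via $\nabla_f p_k=p_k\nabla_f\log p_k$, $\sum_k p_k=1$ and Cauchy--Schwarz), then add and subtract $q_t$, apply Pinsker, and average over $t$.

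Your Step~1 alone already gives the stated inequality, since the $\mathcal{O}(L_{\text{bridge}}^{1/2})$ term is nonnegative, and your $\delta$-tube reading of the hypothesis is exactly what the paper's segment integral tacitly assumes. So the averaged-to-pointwise issue in Step~3, which the paper sidesteps by averaging in $t$, never needs resolving.
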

\noindent which shows that minimizing $L_{\text{bridge}}$ enforces local Lipschitz continuity of $p_\theta$ around $f_u$ and $f_a$.

\begin{lemma}[Decision Boundary Stability]
\label{lemma:boundary}
Let $\mathcal{M}_c = \{f : p_\theta^c(f) = \max_k p_\theta^k(f)\}$ denote the feature manifold of class $c$. Then the expected curvature of the decision boundary $\partial \mathcal{M}_c$ along $\{f_t\}$ satisfies
\begin{equation}
\mathbb{E}_t[\kappa(\partial \mathcal{M}_c)] \le C_0 + C_1 L_{\text{bridge}}^{1/2},
\end{equation}
\end{lemma}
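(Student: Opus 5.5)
The plan is to treat the statement as a consequence of Lemmas~\ref{lemma:pathwise} and~\ref{lemma:lipschitz}. The bound splits into two parts. The constant $C_0$ is the curvature of the decision boundary of the ``ideal'' trajectory $\bar{p}(t)$. The correction $C_1 L_{\text{bridge}}^{1/2}$ measures how far $p_\theta$ deviates from that trajectory near the bridge points.

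\textbf{Step 1: level-set description.} Write the boundary of $\mathcal{M}_c$ near $f_t$ locally as the zero level set of the logit margin
\[
h(f)=\log p_\theta^c(f)-\max_{k\neq c}\log p_\theta^k(f).
\]
This is smooth wherever the competing maximiser $k^\ast$ is unique, and we restrict to that generic case. The standard level-set formula then bounds the mean curvature by
\[
\kappa \le \frac{\|\nabla^2 h\|_{\mathrm{op}}}{\|\nabla h\|_2}.
\]
So we need an upper bound on the Hessian and a lower bound on the gradient along $\{f_t\}$.

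\textbf{Step 2: gradient bound.} The Lipschitz hypothesis of Lemma~\ref{lemma:lipschitz} gives $\|\nabla h\|_2\le 2L$ along the path. For the lower bound on $\|\nabla h\|$, use the structure of the bridge. Since $f_t$ moves at constant speed $f_a-f_u$ and $q_t$ changes monotonically from $\hat q_u$ to $\hat q_a$, Lemma~\ref{lemma:pathwise} forces the directional derivative $\partial_t h(f_t)$ to approximate $\frac{d}{dt}\log(\bar p^c/\bar p^{k^\ast})(t)$. This derivative equals $\log(\hat q_a^c/\hat q_a^{k^\ast})-\log(\hat q_u^c/\hat q_u^{k^\ast})$ up to the approximation error. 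Dividing by $\|f_a-f_u\|$ gives a gradient lower bound $m_0-\mathcal{O}(L_{\text{bridge}}^{1/2})$, where $m_0>0$ holds whenever the anchor is more confident in $c$ than the sample.

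\textbf{Step 3: Hessian bound.} Decompose
\[
\nabla^2 h=\nabla^2 \bar h+\nabla^2(h-\bar h),
\]
where $\bar h$ is the margin of a smooth extension of $\bar p$. The first term is a fixed constant, which contributes to $C_0$. For the second term, Pinsker's inequality converts $\mathrm{KL}(q_t\|p_\theta)$ into an $L^1$ gap. Weighted by $\omega(t)$ and averaged over $t\in[0.2,0.8]$, where $\omega\ge 0.64$, this gap is $\mathcal{O}(L_{\text{bridge}}^{1/2})$. The Gaussian noise $\sigma(t)\varepsilon$ averages this over a tube around the segment. By Gaussian smoothing (Stein's identity), second derivatives of the smoothed error are bounded by $\sigma(t)^{-2}$ times the $L^1$ error. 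Since $\sigma(t)^2\ge 0.16\nu^2$ on the truncated range, this gives $\mathcal{O}(L_{\text{bridge}}^{1/2})$.

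\textbf{Step 4: combine.} Take the ratio from Step~1, expand $1/(m_0-\mathcal{O}(\cdot))$ to first order, and take $\mathbb{E}_t$. Absorbing constants depending on $L$, $\nu$ and $m_0$ gives $C_0+C_1L_{\text{bridge}}^{1/2}$.

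\textbf{Main obstacle.} The hardest step is Step~3: turning an $L^1$ (Pinsker) control of \emph{values} into control of \emph{second derivatives}. Without extra regularity this is false. I would lean on the Gaussian smoothing induced by $\varepsilon$, interpreting curvature for the noise-averaged predictor. Failing that, I would add an explicit assumption of bounded third derivatives and interpolate via Gagliardo--Nirenberg, which gives a worse exponent that may need adjusting.
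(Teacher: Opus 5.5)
Your outline has the same skeleton as the paper's proof: the logit margin, $\kappa\le\|\nabla^2 h\|_{\mathrm{op}}/\|\nabla h\|_2$, a gradient lower bound from the log-odds gap of $q_t$ minus a Pinsker error, and Gaussian smoothing from the bridge noise. Your Step~2 is actually a more coherent version of the paper's denominator step, which differentiates $\log q_t$ in $f$ even though $q_t$ does not depend on $f$. But the obstacle you flag is a genuine gap, not a technicality, and Step~2 suffers from it too.

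What $L_{\text{bridge}}$ gives you, via Pinsker and Jensen, is $\mathbb{E}_{t,\varepsilon}\|p_\theta(f_t)-q_t\|_1=\mathcal{O}(L_{\text{bridge}}^{1/2})$. That is control of \emph{values} at sampled points, and under the stated hypotheses it controls no derivatives. Take linear base logits, so $\nabla m_c\equiv g_0$, and add $\eta\sin(\omega\,v^\top f)$ to the $c$-th logit with $v\perp g_0$. Then:
\begin{itemize}
\item every $\log p_\theta^k$ moves by at most $2\eta$, so $L_{\text{bridge}}$ moves by $\mathcal{O}(\eta)$;
\item $\nabla_f\log p_\theta$ moves by $\mathcal{O}(\eta\omega)$, and $\|\nabla m_c\|_2\ge\|g_0\|_2$;
\item yet $\nabla^2 m_c=-\eta\omega^2\sin(\omega\,v^\top f)\,vv^\top$ acts in an almost tangential direction.
\end{itemize}
Letting $\omega\to\infty$ with $\eta\omega$ fixed and small, the level-set curvature at the bridge points blows up while $L$ and $L_{\text{bridge}}$ barely change. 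So no $\theta$-independent $C_0,C_1$ exist for the boundary of $p_\theta$ itself. For the same reason, Lemma~\ref{lemma:pathwise}, which is a limit statement about values, cannot supply the pointwise $t$-derivative you use in Step~2. Two smaller issues there: $\partial_t f_t=f_a-f_u+\sigma'(t)\varepsilon$, not $f_a-f_u$; and turning $L^1$ errors of probabilities into errors of $\log(p^c/p^{k^\ast})$ requires $p^c,p^{k^\ast}$ bounded below.

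The smoothing patch does not close the gap. Put $\mu_t=(1-t)f_u+tf_a$, $\sigma=\sigma(t)$ and $P(\mu)=\mathbb{E}_\varepsilon\,p_\theta(\mu+\sigma\varepsilon)$.
\begin{itemize}
\item Reading $\kappa$ for $P$ changes the statement, since $\log$ and $\max$ do not commute with the average.
\item Stein's identity $\nabla^2P(\mu_t)=\sigma^{-2}\mathbb{E}[(p_\theta(f_t)-q_t)(\varepsilon\varepsilon^\top-I)]$ has an unbounded weight. It gives $\sigma^{-2}$ times the $L^1$ error only up to a $\log(1/\mathrm{error})$ and a dimension factor, or with a square root via Cauchy--Schwarz.
\item More seriously, the first-order identity $\nabla P(\mu_t)=\sigma^{-1}\mathbb{E}[(p_\theta(f_t)-q_t)\varepsilon^\top]$ holds because $q_t$ is constant in $f$ at fixed $t$. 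It says small bridge error makes $P$ nearly \emph{flat} at $\mu_t$, contradicting any lower bound of the form $m_0/\|f_a-f_u\|-\mathcal{O}(L_{\text{bridge}}^{1/2})$.
\end{itemize}
So Steps~2 and~3 need opposite regimes. Indeed, for fixed endpoints with $\nu>0$ and $m_0>0$, one has $\inf_\theta L_{\text{bridge}}>0$.

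You also never needed the Hessian error to be small: the target only asks for an $\mathcal{O}(1)$ numerator. The paper bounds the smoothed Hessian by a constant coming from the first-order bound and absorbs it into $C_0$. The standard estimate is $\|\nabla^2(\phi_\sigma*g)\|_{\mathrm{op}}\le\sqrt{2/\pi}\,\|\nabla g\|_\infty/\sigma$, with $\phi_\sigma$ the $\mathcal{N}(0,\sigma^2I)$ density; this is finite for $t\in[0.2,0.8]$. The paper is no more rigorous, however. Its Hessian comparison holds only up to model-dependent constants, and its denominator step has the same value-to-derivative gap.

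A genuine proof needs added hypotheses:
\begin{itemize}
\item a uniform Hessian bound on the logits near the path, which settles the numerator directly;
\item probabilities bounded below;
\item $m_0>0$ as an explicit nondegeneracy condition.
\end{itemize}
Even then, recovering a pointwise slope from value errors via difference quotients over short $t$-intervals costs roughly half the exponent of $L_{\text{bridge}}$.
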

\noindent where $\kappa(\cdot)$ is the local curvature and $C_0,C_1>0$ are constants. Thus, decreasing $L_{\text{bridge}}$ flattens the decision boundary and improves robustness to feature noise.

Detailed proofs are provided in the supplementary material. Lemmas~\ref{lemma:pathwise}--\ref{lemma:boundary} establish that bridge consistency acts as an intrinsic geometric regularizer in SSL: the Gaussian feature bridge enforces continuous prediction evolution between noisy unlabeled features and stable class anchors, induces local Lipschitz behavior, and smooths the decision boundary for better robustness under long-tailed distributions. Building on this geometric foundation, we extend the analysis to BridgeMix, which operates at the distribution level. Theorem~\ref{thm:bridgemix-generalization} provides a generalization error bound that quantifies this regularization effect.

\begin{theorem}[Generalization bound for BridgeMix]
\label{thm:bridgemix-generalization}
Let $\mathcal{H}$ be a hypothesis class, and let $\ell:\mathcal{H}\times\mathcal{X}\times\mathcal{Y}\to[0,1]$ be a bounded loss. Let $D$ be the true data distribution on $\mathcal{X}\times\mathcal{Y}$, and let $D_{\mathrm{BM}}$ be the BridgeMix distribution. Draw an i.i.d.\ sample $S=\{z_i\}_{i=1}^n$ with $z_i=(x_i,y_i)\sim D_{\mathrm{BM}}$. For $h\in\mathcal{H}$ define the risks
\begin{equation}
R_D(h)=\mathbb{E}_{z\sim D}\big[\ell(h,z)\big], \quad \widehat{R}_{D_{\mathrm{BM}}}(h)=\frac{1}{n}\sum_{i=1}^n \ell(h,z_i).
\end{equation}
Define the discrepancy $\Delta(D,D_{\mathrm{BM}}) =\sup_{h\in\mathcal{H}}|\mathbb{E}_{D}[\ell(h,z)] - \mathbb{E}_{D_{\mathrm{BM}}}[\ell(h,z)]|$ and the empirical Rademacher complexity $\widehat{\mathfrak{R}}_S(\mathcal{F})$. Then for any $\delta>0$, with probability at least $1-\delta$ over $S \sim D_{\mathrm{BM}}^n$:
\begin{equation}
R_D(h) \le \widehat{R}_{D_{\mathrm{BM}}}(h) + 2 \widehat{\mathfrak{R}}_S(\mathcal{F}) + 3\sqrt{\frac{\log(2/\delta)}{2n}} + \Delta(D,D_{\mathrm{BM}}).
\end{equation}
\end{theorem}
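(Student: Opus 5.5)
The bound splits the true risk at the BridgeMix population risk and controls the two pieces separately:
\[
R_D(h)-\widehat{R}_{D_{\mathrm{BM}}}(h)=\big(R_D(h)-R_{D_{\mathrm{BM}}}(h)\big)+\big(R_{D_{\mathrm{BM}}}(h)-\widehat{R}_{D_{\mathrm{BM}}}(h)\big),
\]
where $R_{D_{\mathrm{BM}}}(h)=\mathbb{E}_{z\sim D_{\mathrm{BM}}}[\ell(h,z)]$. The first bracket is a deterministic distribution-shift term, and the second is a standard uniform-convergence term for i.i.d.\ samples from $D_{\mathrm{BM}}$. Throughout, $\mathcal{F}=\{z\mapsto \ell(h,z): h\in\mathcal{H}\}$ is the loss class, which the statement leaves implicit.

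\textbf{Step 1 (shift term).} By the definition of $\Delta(D,D_{\mathrm{BM}})$ as a supremum over $\mathcal{H}$, we have $R_D(h)-R_{D_{\mathrm{BM}}}(h)\le \Delta(D,D_{\mathrm{BM}})$ for every $h\in\mathcal{H}$. This holds without any probabilistic argument and is uniform in $h$.

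\textbf{Step 2 (uniform deviation).} Let $\Phi(S)=\sup_{h\in\mathcal{H}}\big(R_{D_{\mathrm{BM}}}(h)-\widehat{R}_{D_{\mathrm{BM}}}(h)\big)$.
\begin{itemize}
\item Since $\ell\in[0,1]$, replacing one $z_i$ changes $\Phi$ by at most $1/n$. McDiarmid's inequality then gives $\Phi(S)\le \mathbb{E}\Phi(S)+\sqrt{\log(2/\delta)/(2n)}$ with probability at least $1-\delta/2$.
\item A ghost-sample symmetrization bounds $\mathbb{E}\Phi(S)\le 2\,\mathfrak{R}_n(\mathcal{F})$, where $\mathfrak{R}_n(\mathcal{F})$ is the expected Rademacher complexity.
\item The empirical complexity $\widehat{\mathfrak{R}}_S(\mathcal{F})$ is itself a function of $S$ with bounded differences $1/n$. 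A second McDiarmid application gives $\mathfrak{R}_n(\mathcal{F})\le \widehat{\mathfrak{R}}_S(\mathcal{F})+\sqrt{\log(2/\delta)/(2n)}$ with probability at least $1-\delta/2$.
\end{itemize}
A union bound combines the two events, so with probability at least $1-\delta$,
\[
\Phi(S)\le 2\widehat{\mathfrak{R}}_S(\mathcal{F})+3\sqrt{\log(2/\delta)/(2n)}.
\]
The coefficient is $3$ because the first McDiarmid step contributes one term and the second, multiplied by the factor $2$ from symmetrization, contributes two. This is exactly the constant in the theorem, so the Mohri–Rostamizadeh–Talwalkar form of the bound is the one to reproduce.

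\textbf{Step 3 (combine).} Adding Steps 1 and 2 gives the claimed inequality simultaneously for all $h\in\mathcal{H}$ on the same event of probability at least $1-\delta$.

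The main obstacle is not technical but one of setup. The bound needs the sample $S$ to be genuinely i.i.d.\ from a fixed distribution $D_{\mathrm{BM}}$. In BridgeMix the mixing weights $\lambda_i$ and the anchors depend on teacher confidences and a Prototype Atlas that evolves during training. I would handle this by defining $D_{\mathrm{BM}}$ as the pushforward distribution obtained by drawing the pairs $(x_i,x_j)$, the bridge time $t$, and the noise, with the teacher and atlas frozen at the time of sampling. With that convention the hypothesis of the theorem holds, and the argument reduces to the classical one. The whole regularization effect of BridgeMix then sits in the two quantities $\widehat{\mathfrak{R}}_S(\mathcal{F})$ and $\Delta(D,D_{\mathrm{BM}})$, which the statement keeps explicit rather than bounding.
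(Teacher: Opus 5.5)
Your proof is correct and follows the paper's route. The paper also splits $R_D(h)$ at the $D_{\mathrm{BM}}$ population risk, bounds the shift term by $\Delta(D,D_{\mathrm{BM}})$, and then cites the standard symmetrization-plus-McDiarmid Rademacher bound, which you write out explicitly, including the two McDiarmid applications at level $\delta/2$ that produce the constant $3$. The paper's extra remarks, that BridgeMix does not increase $\widehat{\mathfrak{R}}_S(\mathcal{F})$ and that $\Delta$ can be controlled under Lipschitz and anchor-proximity assumptions, are not needed for the stated inequality, which agrees with your observation that the statement leaves both quantities explicit.
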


\begin{corollary}\label{co:1}
If $\Delta(D,D_{\mathrm{BM}})\le \varepsilon_{\mathrm{mix}}$, then with probability at least $1-\delta$ and $\forall h\in\mathcal{H}$,
\begin{equation}
R_D(h)\le\widehat{R}_{D_{\mathrm{BM}}}(h) +2 \widehat{\mathfrak{R}}_S(\mathcal{F}) +3\sqrt{\frac{\log(2/\delta)}{2n}} +\varepsilon_{\mathrm{mix}}.
\end{equation}
\end{corollary}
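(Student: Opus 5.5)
The corollary follows by substituting the hypothesis $\Delta(D,D_{\mathrm{BM}})\le\varepsilon_{\mathrm{mix}}$ into Theorem~\ref{thm:bridgemix-generalization}. Before doing so, I need the theorem's event to hold uniformly over $\mathcal{H}$, because the corollary quantifies ``$\forall h\in\mathcal{H}$'' inside the probability. I read $\mathcal{F}=\{z\mapsto \ell(h,z):h\in\mathcal{H}\}$ as the loss class.

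\textbf{Step 1: uniform Rademacher bound.} Since $\ell\in[0,1]$, the standard Rademacher generalization bound (McDiarmid applied to $\sup_{h}(\mathbb{E}_{D_{\mathrm{BM}}}\ell-\widehat{R}_{D_{\mathrm{BM}}})$, symmetrization, then a second McDiarmid step to pass from the expected to the empirical Rademacher complexity, with a union bound at level $\delta/2$ each) gives the following. With probability at least $1-\delta$ over $S\sim D_{\mathrm{BM}}^n$, simultaneously for all $h\in\mathcal{H}$,
\begin{equation*}
\mathbb{E}_{D_{\mathrm{BM}}}[\ell(h,z)]\le \widehat{R}_{D_{\mathrm{BM}}}(h)+2\widehat{\mathfrak{R}}_S(\mathcal{F})+3\sqrt{\tfrac{\log(2/\delta)}{2n}}.
\end{equation*}
This is exactly how the theorem's constants arise. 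I will state explicitly that this event, and hence the theorem's conclusion, does not depend on $h$.

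\textbf{Step 2: transfer to the true risk.} For every $h$, the definition of the discrepancy as a supremum gives $R_D(h)\le \mathbb{E}_{D_{\mathrm{BM}}}[\ell(h,z)]+\Delta(D,D_{\mathrm{BM}})$. Combining this with Step 1 recovers the theorem's inequality, uniformly in $h$, on the same event.

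\textbf{Step 3: substitute the hypothesis.} The assumption $\Delta\le\varepsilon_{\mathrm{mix}}$ is deterministic, since $\Delta$ depends only on $D$, $D_{\mathrm{BM}}$ and $\mathcal{H}$ and not on $S$. Replacing $\Delta$ by $\varepsilon_{\mathrm{mix}}$ therefore only loosens the bound and does not change the probability of the event.

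\textbf{Main obstacle.} The only real subtlety is the uniformity issue above. If Theorem~\ref{thm:bridgemix-generalization} were read as holding for a fixed $h$, the ``$\forall h$'' in the corollary would not follow. I would handle this by making Step 1 explicit, i.e.\ by proving the theorem in its uniform form, rather than citing it as a black box.
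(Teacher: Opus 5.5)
Your proposal is correct and matches the paper's proof. The paper invokes Theorem~\ref{thm:bridgemix-generalization} on its high-probability event, which the appendix statement already asserts holds simultaneously for all $h\in\mathcal{H}$, and replaces the deterministic $\Delta(D,D_{\mathrm{BM}})$ by $\varepsilon_{\mathrm{mix}}$; your Steps~1--2 only re-derive that uniform form using the same discrepancy and symmetrization/concentration steps as the paper's proof of the theorem, so the ``uniformity obstacle'' is harmless but unnecessary.
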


Detailed proofs are provided in Section A (supplementary material). Theorem~\ref{thm:bridgemix-generalization} and Corollary~\ref{co:1} together demonstrate that BridgeMix improves generalization by tightening the upper bound on the true risk. Through convex interpolation between unlabeled samples and reliable anchors, BridgeMix reduces both the effective hypothesis complexity and the distributional discrepancy, thereby lowering the generalization gap. This theoretical guarantee indicates that GBC achieves more stable learning with fewer labeled samples. Combined with the geometric regularization in Lemmas~\ref{lemma:pathwise}--\ref{lemma:boundary}, our proposed GBC framework simultaneously enhances robustness to noise and yields stronger generalization across imbalanced data distributions.

\section{Evaluation results}




For the Gaussian Feature Bridging, the \textbf{target layer} for feature fusion is consistently set as the penultimate feature map before the global average pooling layer for CNNs (e.g., the output of Stage 4 in ResNet-50 see the experiment in Table. \ref{tab:layer_ablation}) and the final token representation (before the MLP head) for ViT. We sample $t\!\sim\!\mathrm{Beta}(2,2)$ to interpolate between sample and anchor features via a lightweight projector, with bridge loss weight $\beta$ warmed up to $0.75$, following the spirit of manifold-level regularization \cite{A23,A24}. Detailed configurations, update schedules, and additional adaptation experiments on diverse backbones are provided in the Appendix \cite{A39,A40,A41}.

\begin{table*}[t]
\centering
\small

\begin{minipage}[t]{0.54\textwidth}
\centering
\caption{Detailed ablation on the injection location of the Gaussian Feature Bridge on CIFAR10-LT ($\gamma_\ell=100$). Output dimensions are $C \times H \times W$.}
\label{tab:layer_ablation}
\resizebox{\linewidth}{!}{
\begin{tabular}{lccc}
\toprule
Injection Layer & Output Dimension & Spatial Ratio & Top-1 Acc. (\%) \\
\midrule
ResNet \texttt{layer2}  & $512 \times 8 \times 8$ & $1/4$ & 87.8 \\
ResNet \texttt{layer3}  & $1024 \times 4 \times 4$ & $1/8$ & 90.5 \\
\textbf{ResNet \texttt{layer4}} & $\mathbf{2048 \times 2 \times 2}$ & $\mathbf{1/16}$ & \textbf{92.30} \\
Linear \texttt{fc}      & $10 \times 1 \times 1$ & $1/32$ & 91.2 \\
\bottomrule
\end{tabular}
}
\end{minipage}
\hfill 
\begin{minipage}[t]{0.44\textwidth}
\centering
\caption{\textbf{Wall-clock comparison} on CIFAR10-LT (consistent setting) using a single A100 GPU.}
\label{tab:compute_overhead}
\resizebox{\linewidth}{!}{
\begin{tabular}{lccc}
\toprule
\textbf{Method} & 
\textbf{Avg. Epoch (s)} & 
\textbf{Total Time} & 
\textbf{Overhead} \\
\midrule
FixMatch   & 42.1 & 3.51 h & 0\% \\
SimPro     & 43.0 & 3.58 h & +2.1\% \\
\rowcolor{pink!18}
GBC (ours) & \textbf{42.8} & \textbf{3.56 h} & \textbf{+1.7\%} \\
\bottomrule
\end{tabular}
}
\end{minipage}

\vspace{0.5em}
\end{table*}

\subsection{Experimental Evaluation}

The performance of Gaussian Bridge Consistency (GBC) is evaluated on CIFAR10-LT, ImageNet-127, and ImageNet-1K.
As reported in Table~\ref{tab:cifar10lt_gbc}, GBC achieves the best or competitive performance across most unlabeled class distributions.
In challenging scenarios such as \emph{reversed} and \emph{head-tail}, where pseudo-label noise and distribution mismatch are more severe,
GBC surpasses SimPro by 0.94\% and 1.52\%, respectively.
Compared with adaptive thresholding methods such as DyTrim, GBC consistently delivers substantial gains,
indicating that representation-level regularization is more effective than purely logit-level filtering under imbalance.

When incorporating feature interpolation, SimPro+Manifold MixUp improves over SimPro,
confirming that feature-space smoothing is beneficial.
Under the \emph{uniform} distribution, SimPro+Manifold MixUp slightly outperforms GBC,
suggesting that class-agnostic interpolation is sufficient when class imbalance is minimal.
However, under imbalanced settings (consistent, reversed, middle, and head-tail),
GBC consistently achieves stronger improvements,
highlighting the advantage of class-conditional prototype-guided bridge regularization in handling skewed distributions and pseudo-label noise.

The scalability of GBC is further validated on large-scale ImageNet benchmarks in Table \ref{tab:imagenet_gbc}. For ImageNet-127 under the realistic test imbalance ($\gamma_t \approx 286$), GBC achieves 61.9\% at $32\times32$ resolution, surpassing Meta-Expert by 1.6\% and SimPro by 2.8\%. At $64\times64$ resolution, GBC establishes a new SOTA of 68.6\%. On the full ImageNet-1K dataset, GBC maintains its dominance with a Top-1 accuracy of 27.5\% ($64\times64$), representing a significant 2.1\% improvement over Meta-Expert and a 2.5\% increase over the SimPro baseline. These results collectively indicate that constructing class-conditional interpolation paths in latent space effectively mitigates confirmation bias and semantic drift. By leveraging reliable class anchors from the Prototype Atlas, GBC ensures stable representation alignment even under extreme distributional skew.

\textbf{Effect of Target Layer Selection.}
To investigate the optimal semantic level for constructing the Gaussian Feature Bridge, we evaluate GBC by injecting bridge points at different stages of the ResNet-50 backbone. As summarized in Table~\ref{tab:layer_ablation}, injecting the bridge at the \emph{penultimate layer} (Stage 4) yields the highest Top-1 accuracy of 92.3\%. In contrast, bridges constructed at earlier stages (e.g., Stage 2) lead to a performance drop (approx. 4.5\%), likely because shallow features capture low-level textures rather than the high-level class semantics required for stable representation alignment. This validates our design choice of utilizing deep latent spaces for semantic bridging.



\begin{table*}[t]
\centering
\small
\setlength{\tabcolsep}{3pt} 
\caption{
Top-1 accuracy (\%) on CIFAR10-LT ($N_1=500, M_1=4000$) with different labeled/unlabeled imbalance ratios 
$\gamma_\ell, \gamma_u$ under five unlabeled class distributions (SimPro protocol~\cite{A13}). 
$\dagger$ denotes reproduced results without anchor distributions for fair comparison~\cite{A8}. 
All results are reported in alignment with established SimPro performance trends; best in each column is \textbf{bold}. 
\textcolor{pink}{Pink} highlights our GBC, and \textcolor{orange}{Peach} indicates the baseline (SimPro). Manifold MixUp is a strong interpolation-based baselines
}
\label{tab:cifar10lt_gbc}

\resizebox{\textwidth}{!}{
\begin{tabular}{lcccccccccc}
\toprule
& \multicolumn{2}{c}{\textbf{consistent}} & \multicolumn{2}{c}{\textbf{uniform}} & \multicolumn{2}{c}{\textbf{reversed}} & \multicolumn{2}{c}{\textbf{middle}} & \multicolumn{2}{c}{\textbf{head-tail}} \\
\cmidrule(lr){2-3}\cmidrule(lr){4-5}\cmidrule(lr){6-7}\cmidrule(lr){8-9}\cmidrule(lr){10-11}
$\gamma_\ell$ & 150 & 100 & 150 & 100 & 150 & 100 & 150 & 100 & 150 & 100 \\
$\gamma_u$    & 150 & 100 & 1 & 1 & $1/150$ & $1/100$ & 150 & 100 & 150 & 100 \\
\midrule
FixMatch~\cite{A5} & 62.91{\tiny$\pm$0.87} & 65.48{\tiny$\pm$0.81} & 71.03{\tiny$\pm$0.69} & 72.49{\tiny$\pm$0.61} & 59.94{\tiny$\pm$1.02} & 62.53{\tiny$\pm$0.95} & 64.29{\tiny$\pm$0.85} & 65.95{\tiny$\pm$0.74} & 58.33{\tiny$\pm$1.01} & 60.52{\tiny$\pm$0.93} \\

w/ CReST+~\cite{A10} & 67.98{\tiny$\pm$0.73} & 70.45{\tiny$\pm$0.63} & 82.01{\tiny$\pm$0.51} & 84.96{\tiny$\pm$0.43} & 65.97{\tiny$\pm$0.76} & 69.03{\tiny$\pm$0.71} & 69.00{\tiny$\pm$0.65} & 72.47{\tiny$\pm$0.54} & 62.98{\tiny$\pm$0.82} & 65.95{\tiny$\pm$0.76} \\

w/ DASO~\cite{A9} & 73.01{\tiny$\pm$0.49} & 74.48{\tiny$\pm$0.51} & 87.97{\tiny$\pm$0.31} & 89.52{\tiny$\pm$0.29} & 73.99{\tiny$\pm$0.58} & 75.51{\tiny$\pm$0.52} & 74.99{\tiny$\pm$0.48} & 78.53{\tiny$\pm$0.41} & 69.97{\tiny$\pm$0.66} & 72.48{\tiny$\pm$0.62} \\

w/ ACR\textsuperscript{$\dagger$}~\cite{A8} & 76.99{\tiny$\pm$0.41} & 81.61{\tiny$\pm$0.33} & 91.32{\tiny$\pm$0.22} & 92.09{\tiny$\pm$0.19} & 81.82{\tiny$\pm$0.27} & 85.03{\tiny$\pm$0.28} & 77.90{\tiny$\pm$0.49} & 73.62{\tiny$\pm$0.46} & 78.96{\tiny$\pm$0.37} & 79.82{\tiny$\pm$0.39} \\

w/ SimPro~\cite{A13} & \cellcolor[HTML]{FFDAB9}74.24{\tiny$\pm$0.32} & \cellcolor[HTML]{FFDAB9}85.66{\tiny$\pm$0.28} & \cellcolor[HTML]{FFDAB9}93.63{\tiny$\pm$0.11} & \cellcolor[HTML]{FFDAB9}93.76{\tiny$\pm$0.10} & \cellcolor[HTML]{FFDAB9}83.52{\tiny$\pm$0.19} & \cellcolor[HTML]{FFDAB9}85.84{\tiny$\pm$0.21} & \cellcolor[HTML]{FFDAB9}82.58{\tiny$\pm$0.26} & \cellcolor[HTML]{FFDAB9}84.83{\tiny$\pm$0.20} & \cellcolor[HTML]{FFDAB9}81.03{\tiny$\pm$0.29} & \cellcolor[HTML]{FFDAB9}82.98{\tiny$\pm$0.30} \\

w/ DyTrim~\cite{A49} & 73.12{\tiny$\pm$0.41} & 75.45{\tiny$\pm$0.36} & 88.30{\tiny$\pm$0.28} & 90.12{\tiny$\pm$0.22} & 74.88{\tiny$\pm$0.45} & 76.51{\tiny$\pm$0.41} & 75.32{\tiny$\pm$0.39} & 78.94{\tiny$\pm$0.35} & 70.43{\tiny$\pm$0.48} & 72.96{\tiny$\pm$0.42} \\

\midrule

w/ SimPro+Manifold MixUp~\cite{A50} 
& \cellcolor[HTML]{FFDAB9}75.62{\tiny$\pm$0.27} 
& \cellcolor[HTML]{FFDAB9}89.14{\tiny$\pm$0.21} 
& \cellcolor[HTML]{FFDAB9}\textbf{94.02}{\tiny$\pm$0.08} 
& \cellcolor[HTML]{FFDAB9}94.19{\tiny$\pm$0.09} 
& \cellcolor[HTML]{FFDAB9}84.21{\tiny$\pm$0.18} 
& \cellcolor[HTML]{FFDAB9}86.11{\tiny$\pm$0.19} 
& \cellcolor[HTML]{FFDAB9}83.04{\tiny$\pm$0.23} 
& \cellcolor[HTML]{FFDAB9}84.92{\tiny$\pm$0.21} 
& \cellcolor[HTML]{FFDAB9}82.47{\tiny$\pm$0.28} 
& \cellcolor[HTML]{FFDAB9}83.63{\tiny$\pm$0.26} \\

\midrule
\textbf{w/ GBC (ours)} & \cellcolor[HTML]{FFC0CB}\textbf{77.53}{\tiny$\pm$0.25} & \cellcolor[HTML]{FFC0CB}\textbf{92.30}{\tiny$\pm$0.17} & \cellcolor[HTML]{FFC0CB}93.97{\tiny$\pm$0.09} & \cellcolor[HTML]{FFC0CB}\textbf{94.51}{\tiny$\pm$0.07} & \cellcolor[HTML]{FFC0CB}\textbf{85.01}{\tiny$\pm$0.20} & \cellcolor[HTML]{FFC0CB}\textbf{86.78}{\tiny$\pm$0.23} & \cellcolor[HTML]{FFC0CB}\textbf{84.30}{\tiny$\pm$0.22} & \cellcolor[HTML]{FFC0CB}\textbf{85.49}{\tiny$\pm$0.19} & \cellcolor[HTML]{FFC0CB}\textbf{83.96}{\tiny$\pm$0.31} & \cellcolor[HTML]{FFC0CB}\textbf{84.50}{\tiny$\pm$0.27} \\

\bottomrule
\end{tabular}
}
\end{table*}

\begin{table*}[t]
\centering
\caption{Top-1 accuracy (\%) on \textbf{ImageNet-127} ($\gamma_\ell{\approx}286$, $N_1{\approx}28$k, $M_1{\approx}250$k) and \textbf{ImageNet-1k} ($\gamma_\ell{=}256$, $N_1{=}256$, $M_1{=}1024$) under different test imbalance $\gamma_t$ and resolutions.
$\dagger$ indicates results reproduced for ACR without anchor distributions. 
All baselines follow the SimPro protocol \cite{A13}. \textcolor{LightPink}{Pink} indicates GBC while \textcolor{Peach}{Peach} indicates the ReaLTSSL baseline (SimPro).}
\label{tab:imagenet_gbc}

\resizebox{\textwidth}{!}{%
\begin{tabular}{lcccccc}
\toprule
& \multicolumn{2}{c}{\textbf{ImageNet-127} ($\gamma_t{\approx}286$)} & \multicolumn{2}{c}{\textbf{ImageNet-127} ($\gamma_t{=}1$)} & \multicolumn{2}{c}{\textbf{ImageNet-1k} ($\gamma_t{=}1$)} \\
\cmidrule(lr){2-3}\cmidrule(lr){4-5}\cmidrule(lr){6-7}
Method & $32{\times}32$ & $64{\times}64$ & $32{\times}32$ & $64{\times}64$ & $32{\times}32$ & $64{\times}64$ \\
\midrule
FixMatch~\cite{A5}                & 29.7{\tiny$\pm$0.37} & 42.3{\tiny$\pm$0.41} & 38.7{\tiny$\pm$0.46} & 46.7{\tiny$\pm$0.44} & -- & -- \\
w/ DARP (NeurIPS'22) \cite{A42}   & 30.5{\tiny$\pm$0.28} & 42.5{\tiny$\pm$0.40} & -- & -- & -- & -- \\
w/ CReST+ (CVPR'21) \cite{A10}    & 32.5{\tiny$\pm$0.33} & 44.7{\tiny$\pm$0.39} & -- & -- & -- & -- \\
w/ CoSSL (CVPR'22) \cite{A43}     & 43.7{\tiny$\pm$0.35} & 53.9{\tiny$\pm$0.38} & -- & -- & -- & -- \\
w/ SAW (ICML'22) \cite{A28}       & 58.3{\tiny$\pm$0.31} & 65.5{\tiny$\pm$0.29} & 54.9{\tiny$\pm$0.27} & 62.7{\tiny$\pm$0.25} & 18.9{\tiny$\pm$0.33} & 24.4{\tiny$\pm$0.36} \\
w/ Adsh (ICML'22) \cite{A29}      & 58.8{\tiny$\pm$0.32} & 66.1{\tiny$\pm$0.34} & 55.1{\tiny$\pm$0.28} & 63.3{\tiny$\pm$0.26} & 19.2{\tiny$\pm$0.31} & 24.7{\tiny$\pm$0.34} \\
w/ DePL (CVPR'22) \cite{A30}      & 59.0{\tiny$\pm$0.36} & 66.5{\tiny$\pm$0.35} & 55.5{\tiny$\pm$0.29} & 63.6{\tiny$\pm$0.27} & 19.5{\tiny$\pm$0.32} & 24.8{\tiny$\pm$0.36} \\
w/ BaCon (AAAI'24) \cite{A31}     & 59.4{\tiny$\pm$0.26} & 66.8{\tiny$\pm$0.25} & 55.9{\tiny$\pm$0.23} & 63.9{\tiny$\pm$0.24} & 19.8{\tiny$\pm$0.26} & 25.1{\tiny$\pm$0.28} \\
w/ CPE (AAAI'24) \cite{A32}       & 59.7{\tiny$\pm$0.24} & 67.2{\tiny$\pm$0.25} & 56.1{\tiny$\pm$0.22} & 64.2{\tiny$\pm$0.24} & 20.0{\tiny$\pm$0.25} & 25.2{\tiny$\pm$0.27} \\
w/ ACR (CVPR'23) \cite{A8}        & 57.2{\tiny$\pm$0.29} & 63.6{\tiny$\pm$0.31} & 50.6{\tiny$\pm$0.26} & 57.3{\tiny$\pm$0.27} & 13.8{\tiny$\pm$0.23} & 23.5{\tiny$\pm$0.26} \\
w/ ACR$^{\dagger}$ (CVPR'23) \cite{A8} & -- & -- & 49.5{\tiny$\pm$0.27} & 56.1{\tiny$\pm$0.29} & 13.2{\tiny$\pm$0.24} & 23.4{\tiny$\pm$0.25} \\
w/ SimPro (ICML'24) \cite{A13}    & \cellcolor{Peach}59.1{\tiny$\pm$0.23} & \cellcolor{Peach}67.0{\tiny$\pm$0.22} & \cellcolor{Peach}55.7{\tiny$\pm$0.21} & \cellcolor{Peach}63.8{\tiny$\pm$0.23} & \cellcolor{Peach}19.7{\tiny$\pm$0.22} & \cellcolor{Peach}25.0{\tiny$\pm$0.24} \\
w/ Meta-Expert (ICML'25) \cite{A33} & 60.3{\tiny$\pm$0.22} & 67.4{\tiny$\pm$0.23} & 56.5{\tiny$\pm$0.21} & 64.8{\tiny$\pm$0.23} & 20.8{\tiny$\pm$0.21} & 25.4{\tiny$\pm$0.24} \\
w/ DyTrim (ICLR'26) \cite{A49}    & 60.8{\tiny$\pm$0.25} & 67.9{\tiny$\pm$0.23} & 56.9{\tiny$\pm$0.23} & 65.7{\tiny$\pm$0.28} & 21.9{\tiny$\pm$0.24} & 26.1{\tiny$\pm$0.25} \\
\midrule
\textbf{w/ GBC (ours)}            & \cellcolor{LightPink}\textbf{61.9}{\tiny$\pm$0.36} & \cellcolor{LightPink}\textbf{68.6}{\tiny$\pm$0.24} & \cellcolor{LightPink}\textbf{57.6}{\tiny$\pm$0.22} & \cellcolor{LightPink}\textbf{67.0}{\tiny$\pm$0.37} & \cellcolor{LightPink}\textbf{23.1}{\tiny$\pm$0.23} & \cellcolor{LightPink}\textbf{27.5}{\tiny$\pm$0.26} \\
\bottomrule
\end{tabular}%
}
\vspace{-3mm}
\end{table*}

\subsection{Visualization}

\begin{figure}[t]
  \centering
  \includegraphics[width=0.95\linewidth]{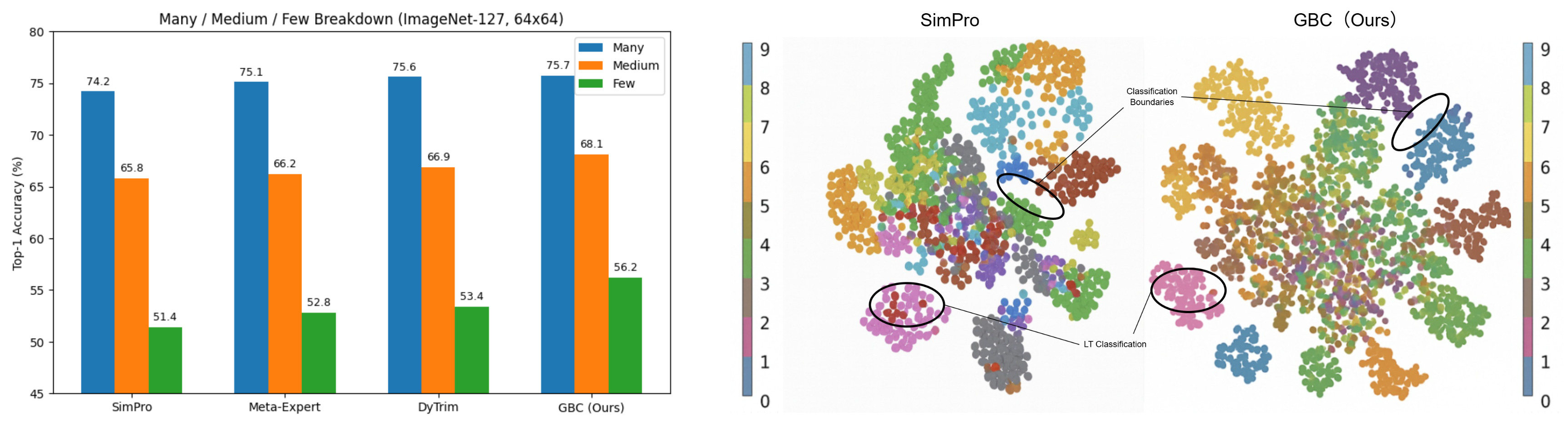}
  \caption{\textbf{Many/Medium/Few breakdown and t-SNE visualization.} \textbf{Left:} Per-split Top-1 accuracy on ImageNet-127 
  ($\gamma_t{\approx}286$, $64{\times}64$). 
  Classes are grouped by labeled training samples under the 
  long-tailed distribution ($\gamma_\ell$): 
  Many $>$100, Medium 20–100, Few $<$20. 
  The split is fixed across all methods and accuracy is 
  averaged per class within each group.
  The same class split is fixed across all compared methods, 
  and group accuracy is computed as the unweighted average 
  over classes within each group.  \textbf{Right:} t-SNE visualization on CIFAR10-LT (uniform setting).
  }
  \label{fig:tsne_gbc}
  \vspace{-1.5em}
\end{figure}

As shown in Fig.~\ref{fig:tsne_gbc}, GBC yields more compact and better separated clusters than SimPro on CIFAR10-LT, especially for tail classes and near decision boundaries, where SimPro exhibits noticeable overlap and dispersion. This pattern matches the Many/Medium/Few breakdown on ImageNet-127: gains on Many classes are small, while improvements are larger on Medium and particularly Few categories. This suggests that the bridge regularization mainly enhances tail robustness by stabilizing representation alignment under imbalance.

\section{Ablation Study}

\begin{figure*}[t]
  \centering
  \includegraphics[width=\linewidth]{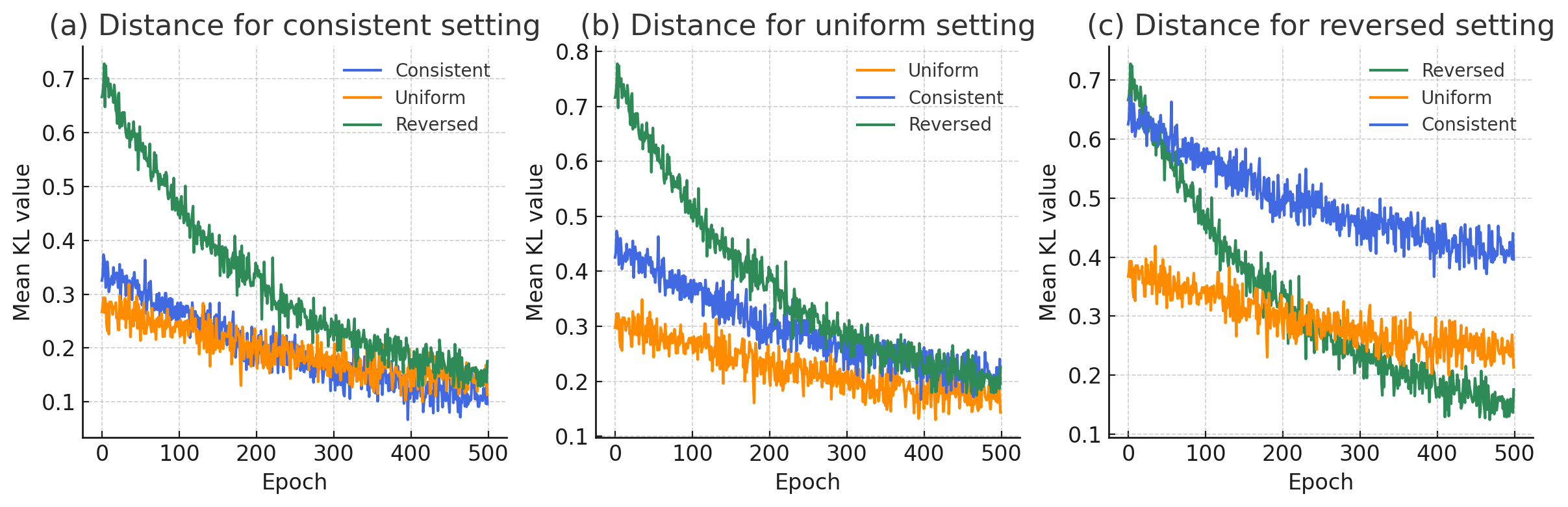} 
  \caption{
  Mean KL divergence on CIFAR10-LT ($N_1 = 500$, $M_1 = 4000$, SimPro protocol).
   $\mathrm{KL}(p_{\text{teacher}}\Vert p_{\text{student}})$ is plot on \emph{weak} views,
  averaged over \textbf{3 seeds} and smoothed by EMA(0.9).
  Curves correspond to three unlabeled-distribution regimes: \textcolor{blue}{\textbf{consistent}} ($\gamma_\ell{=}\gamma_u{=}150$),
  \textcolor{orange}{\textbf{uniform}} ($\gamma_u{=}1$),
  \textcolor{green!60!black}{\textbf{reversed}} ($\gamma_u{=}1/\gamma_\ell$).
  GBC exhibits faster and smoother KL divergence decay across all regimes, especially under \textcolor{green!60!black}{\textbf{reversed}}, indicating stable bridge-consistency alignment under severe pseudo-label noise.
  }
  \label{fig:kl_distance_gbc}
  \vspace{-0.5em}
\end{figure*}

\paragraph{Optimization Stability under Distribution Shift.}
Figure~\ref{fig:kl_distance_gbc} presents the mean teacher--student KL divergence across training under three unlabeled distribution regimes. In all cases, the KL value decreases monotonically, indicating progressive alignment between teacher and student predictions. This confirms that the bridge-consistency mechanism continuously reduces distributional discrepancy during training rather than inducing oscillatory dynamics.

Under the consistent setting, the reversed curve starts from the largest divergence but exhibits the steepest decay, converging toward the other regimes after approximately 300 epochs. This behavior suggests that the Gaussian bridge effectively corrects early-stage bias even when initialization produces strong mismatch. The uniform regime maintains the lowest KL throughout most of training, reflecting reduced structural bias when class frequencies are balanced.

Under the uniform setting, the reversed regime again begins with the highest divergence but shows smooth and stable decay with limited fluctuation. The gap between consistent and uniform gradually narrows in later epochs, implying that bridge-level supervision dominates optimization dynamics once pseudo-label quality improves. The convergence patterns indicate that alignment is governed more by geometric regularization than by raw class-frequency statistics.

The reversed setting reveals the most informative contrast. Although the reversed regime starts with severe mismatch, its KL decreases rapidly and eventually reaches the lowest level among the three. In contrast, the consistent curve decreases more slowly and stabilizes at a higher divergence. This demonstrates that the bridge mechanism is particularly effective under strong distribution shift, where confirmation bias would otherwise accumulate. The consistent monotonic decay and reduced variance across all regimes indicate that Gaussian feature bridging enforces stable representation evolution and suppresses abrupt prediction drift.

\begin{figure}[t]
  \centering
  \includegraphics[width=\linewidth]{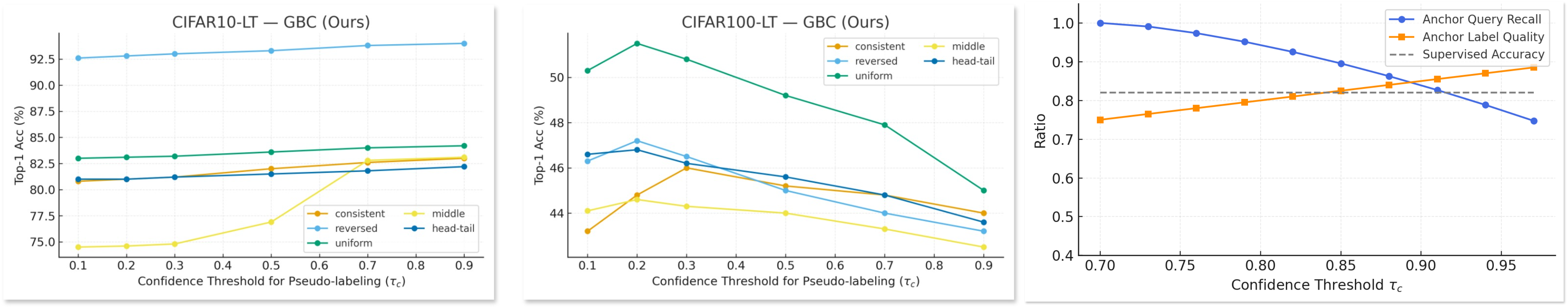}
  \caption{Effect of the confidence threshold $\tau_c$ on performance.
  }
  \label{fig:tau_acc}
  \vspace{-1em}
\end{figure}

\begin{table*}[t]
\centering
\small

\begin{minipage}[t]{0.48\textwidth}
\centering
\caption{Component ablation on CIFAR10-LT ($\gamma_\ell{=}100$, $\gamma_u{=}100$).}
\label{tab:ablation_cifar10lt}
\resizebox{\linewidth}{!}{
\begin{tabular}{lcc}
\toprule
Variant & Setting & Top-1 (\%) \\
\midrule
Full GBC              & -- & \textbf{92.3 ± 0.17} \\
w/o $\mathcal{L}_{bridge}$ & Remove bridge loss & 86.1 ± 0.24 \\
w/o noise             & $\sigma(t)=0$ & 90.2 ± 0.21 \\
Hard bridge           & $\tilde{f}=f_t$ & 88.4 ± 0.19 \\
w/o reweighting       & uniform $w_c$ & 85.3 ± 0.22 \\
\bottomrule
\end{tabular}
}
\end{minipage}
\hfill
\begin{minipage}[t]{0.48\textwidth}
\centering
\caption{Effect of BridgeMix and prototype allocation (PA) under different unlabeled distributions ($\gamma_\ell{=}100$).}
\label{tab:ablation_bridge_pa}
\resizebox{\linewidth}{!}{
\begin{tabular}{lccc}
\toprule
Variant & Consistent & Uniform & Reversed \\
\midrule
w/o BridgeMix & 87.4 ± 0.20 & 88.9 ± 0.22 & 81.7 ± 0.26 \\
Full GBC      & \textbf{92.3 ± 0.17} & \textbf{94.5 ± 0.07} & \textbf{86.8 ± 0.23} \\
PA = 2\%      & 85.7 ± 0.21 & 90.8 ± 0.09 & 84.6 ± 0.25 \\
PA = 8\%      & 91.1 ± 0.18 & 94.1 ± 0.08 & 86.0 ± 0.22 \\
\bottomrule
\end{tabular}
}
\end{minipage}

\vspace{-0.6em}
\end{table*}

Figure~\ref{fig:tau_acc} evaluates the influence of $\tau_c$ across CIFAR10-LT and CIFAR100-LT. On CIFAR10-LT, accuracy increases as $\tau_c$ grows. Most regimes show monotonic gains, indicating that stricter thresholds improve pseudo-label precision without severely reducing anchor coverage. The reversed regime remains stable, suggesting robustness once unreliable anchors are filtered.

On CIFAR100-LT, performance peaks at moderate thresholds ($\tau_c \approx 0.2$--$0.3$) and declines thereafter. This reflects a precision–coverage trade-off. As $\tau_c$ increases, anchor label quality improves but anchor recall drops sharply. The right panel confirms this trend, with the two curves intersecting near $\tau_c \approx 0.85$, close to the supervised baseline. 

The contrast between datasets indicates that higher semantic complexity and fewer samples per class make strict thresholds harmful due to reduced anchor diversity. These results support a class-adaptive $\tau_c$ strategy that balances reliability and coverage to preserve effective bridge regularization.

\begin{table*}[t]
\centering
\small

\begin{minipage}[t]{0.46\textwidth}
\centering
\caption{Sensitivity analysis of GBC hyperparameters on CIFAR10-LT ($\gamma_\ell=100$, consistent setting).}
\label{tab:sensitivity}
\resizebox{\linewidth}{!}{
\begin{tabular}{lcc}
\toprule
Hyperparameter & Range & Top-1 Acc. (\%) \\
\midrule
$\nu$    & 0.00 / 0.05 / \textbf{0.10} / 0.20 & 91.8 / 92.1 / \textbf{92.3} / 92.0 \\
$\alpha$ & 1.0 (Uniform) / \textbf{2.0} / 5.0 & 91.5 / \textbf{92.3} / 92.1 \\
$\beta$  & 0.25 / 0.50 / \textbf{0.75} / 1.00 & 90.4 / 91.7 / \textbf{92.3} / 91.9 \\
\bottomrule
\end{tabular}
}
\end{minipage}
\hfill 
\begin{minipage}[t]{0.50\textwidth}
\centering
\caption{Computational overhead comparison on ResNet-50 ($32^2$).}
\label{tab:overhead}
\resizebox{\linewidth}{!}{
\begin{tabular}{lccc}
\toprule
Method & Params (M) & Peak Mem. & Epoch Time (s) \\
\midrule
FixMatch & 25.6 & 1.00$\times$ & 42.1 \\
w/ SimPro & 25.6 & 1.00$\times$ & 43.0 \\
w/ \textbf{GBC (Ours)} & \textbf{26.1} & \textbf{1.03$\times$} & \textbf{42.8} \\
\bottomrule
\end{tabular}
}
\end{minipage}

\vspace{-1.5em}
\end{table*}

Tables~\ref{tab:ablation_cifar10lt} and~\ref{tab:ablation_bridge_pa} quantify the role of each component. Removing $\mathcal{L}_{\text{bridge}}$ yields the largest drop (92.3\% $\rightarrow$ 85.1\%), confirming that path-level consistency is essential. Removing Gaussian noise lowers accuracy to 90.2\%, showing that moderate stochasticity improves robustness. Hard replacement further degrades performance, indicating that residual fusion stabilizes feature updates. Eliminating class reweighting causes a sharp decline (80.5\%), highlighting the necessity of tail-aware balancing. BridgeMix consistently improves results across distributions, while Prototype Atlas size shows a trade-off: insufficient capacity limits coverage, and excessive capacity introduces redundancy. A moderate size achieves optimal stability.

Table~\ref{tab:sensitivity} shows stable behavior across reasonable ranges. The noise level $\nu=0.10$ achieves the best balance between exploration and semantic preservation. The Beta parameter $\alpha=2.0$ outperforms uniform sampling by emphasizing mid-bridge states. The bridge weight $\beta$ peaks at 0.75 and slightly declines beyond this value, indicating that bridge alignment should remain complementary to the main SSL objective.

Tables~\ref{tab:overhead} and~\ref{tab:compute_overhead} show negligible overhead. Parameters increase marginally (25.6M $\rightarrow$ 26.1M), peak memory rises by 3\%, and runtime increases by only +1.7\%. These results demonstrate that GBC maintains scalability while providing substantial accuracy gains.

\section{Conclusion}
We introduced GBC, a Gauss Bridge driven paradigm that redefines semi‑supervised learning as a process of geometric transport between uncertain features and reliable class prototypes. GBC transforms noisy pseudo‑labeling into a stable representation flow, achieving robust balance under long‑tailed and noisy conditions.
Beyond empirical gains, GBC frames SSL within the lens of probabilistic geometry, bridging optimal transport and consistency learning.
This perspective invites new research on dynamic manifold regularization, distributional interpolation for multimodal and temporal data, and continuous‑time consistency fields, paving the way toward a unified geometry aware theory of semi‑supervised representation learning.

%
%
\bibliographystyle{splncs04}
\bibliography{main}

\clearpage

\appendix

\section{Theoretical Analysis}

We provide a theoretical analysis to explain why the proposed  GBC framework and BridgeMix  enhance robustness and generalization in SSL. 
At the feature level, GBC regularizes learning by enforcing smooth transitions from uncertain unlabeled features to reliable class anchors, yielding continuous predictions and stable decision boundaries. 
At the distribution level, BridgeMix reduces hypothesis complexity and distributional discrepancy through confidence-guided interpolation between samples and anchors. 
Together, these results offer a unified theoretical foundation for the stability and generalization of our GBC framework under noisy and long-tailed settings.

\begin{assumption}[Bridge Consistency Setting]
\label{assump:bridge-setting}
Let $f_u \in \mathbb{R}^d$ be the feature of an unlabeled sample and 
$f_a \in \mathbb{R}^d$ the corresponding class anchor from the PA.  
A Gaussian bridge path is defined in the feature space according to Eq.1:
 \begin{equation}
f_t = (1 - g(t)) f_u + g(t) f_a + \epsilon_t,
 \end{equation}
where $\epsilon_t \sim \mathcal{N}(0, \sigma(t)^2 I)$, 
$\sigma(t) = \nu \sqrt{t(1-t)}$, and $g(t)$ is a monotonically increasing bridge function with $g(0)=0, g(1)=1$.  
The student model predicts $p_\theta(f_t)$, while the geometric target interpolation $q_t$ follows Eq.4.  
The bridge consistency loss is defined in Eq.5 as:
 \begin{equation}
L_{\text{bridge}} = 
\mathbb{E}_{t} \left[\mathrm{KL} \big(q_t  \|  p_\theta(f_t)\big)\right].
 \end{equation}
Throughout the analysis, we assume that $p_\theta(\cdot)$ and $q_t$ are continuously differentiable in $f_t$ and $t$.
\end{assumption}

\begin{lemma}[Pathwise Convergence]
\label{lemma:pathwise}
Under Assumption~\ref{assump:bridge-setting}, 
if $L_{\text{bridge}} \to 0$, 
then there exists a continuous trajectory $\bar{p}(t)$ such that
 \begin{equation}
\|p_\theta(f_t) - \bar{p}(t)\|_1 \to 0, \quad \forall t\in[0,1],
 \end{equation}
with boundary conditions $\bar{p}(0)=\hat{q}_u$ and $\bar{p}(1)=\hat{q}_a$.  
\end{lemma}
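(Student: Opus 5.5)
The natural choice is to take the candidate trajectory to be the target path itself, $\bar{p}(t):=q_t$ from Eq.~\eqref{eq:target}.

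\textbf{Continuity and boundary values of $\bar p$.} Under Assumption~\ref{assump:bridge-setting}, $q_t$ is continuously differentiable in $t$. This also follows directly: after the $\epsilon$-smoothing of the targets, $t\mapsto(1-t)\log\hat q_u+t\log\hat q_a$ is affine with finite entries, and softmax is smooth. Setting $t=0$ and $t=1$ gives $q_0=\mathrm{softmax}(\log\hat q_u)=\hat q_u$ and $q_1=\hat q_a$, because $\hat q_u$ and $\hat q_a$ lie on the simplex. So the boundary conditions hold by construction, and what remains is the convergence statement.

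\textbf{From $L_{\text{bridge}}\to0$ to pointwise KL convergence.} The integrand $h_\theta(t):=\mathbb{E}_{\epsilon}\,\mathrm{KL}(q_t\|p_\theta(f_t))$ is nonnegative. Hence $L_{\text{bridge}}=\int_0^1 h_\theta(t)\,\pi(t)\,dt\to0$, where $\pi$ is the sampling density of $t$ (for example Beta$(2,2)$, possibly multiplied by $\omega(t)$). This gives $h_\theta\to0$ in $L^1(\pi)$, and hence in $\pi$-measure. Convergence for \emph{every} $t$ needs more. I would assume, as Assumption~\ref{assump:bridge-setting} implicitly does, that the family $\{h_\theta\}$ is equicontinuous (uniformly Lipschitz) in $t$ on $[0,1]$. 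This holds if $\nabla_f p_\theta$ and $\partial_t q_t$ are uniformly bounded along the training sequence and $p_\theta$ is bounded away from $0$ on the path. A standard argument then applies. Suppose $h_\theta(t_0)\ge\eta$ at some $t_0$. Equicontinuity gives $h_\theta\ge\eta/2$ on an interval of length $r(\eta)$ around $t_0$, so $L_{\text{bridge}}\ge \tfrac{\eta}{2}\,\pi$-mass of that interval $>0$, which is a contradiction in the limit. Thus $\sup_t h_\theta(t)\to0$. At the endpoints, where $\pi$ and $\omega$ vanish, continuity lets us extend from the interior.

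\textbf{From KL to $\ell_1$.} Pinsker's inequality together with Jensen's inequality gives
\begin{equation}
\mathbb{E}_\epsilon\|p_\theta(f_t)-q_t\|_1\le \mathbb{E}_\epsilon\sqrt{2\,\mathrm{KL}(q_t\|p_\theta(f_t))}\le\sqrt{2h_\theta(t)}\to0
\end{equation}
uniformly in $t$. Since $p_\theta(f_t)$ is random through $\epsilon_t$, the claim is to be read in mean, or equivalently in probability over $\epsilon_t$. Because $\sigma(0)=\sigma(1)=0$, the endpoint statements are deterministic: $p_\theta(f_u)\to\hat q_u$ and $p_\theta(f_a)\to\hat q_a$. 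This also yields the remark after the lemma.

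\textbf{Main obstacle.} The hard step is upgrading averaged convergence in $t$ to convergence for all $t$. Three things make it hard. The weights $\omega(t)$ and $\pi(t)$ vanish at the endpoints. The convergence of $h_\theta$ is only in mean over the Gaussian noise. And KL can blow up when $p_\theta$ has near-zero entries. I would handle these with the equicontinuity hypothesis stated above (justified by the assumed $C^1$ regularity plus the $\epsilon$-smoothing). If that proves too strong, I would fall back to the weaker conclusion $\|p_\theta(f_t)-\bar p(t)\|_1\to0$ for $\pi$-almost every $t$.
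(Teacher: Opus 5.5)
Your proposal follows the paper's skeleton. You take $\bar p(t):=q_t$, read off $\bar p(0)=\hat q_u$ and $\bar p(1)=\hat q_a$ from the geometric interpolation, and pass from KL to $\ell_1$ by Pinsker. The difference is the decisive step: upgrading averaged convergence in $t$ to convergence at every $t$.

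The paper goes from $\mathbb{E}_t\|p_\theta(f_t)-q_t\|_1\to0$ to a.e.\ convergence along a subsequence $\theta_n$ (Riesz's theorem). It then claims that ``redefining on a null set'' gives convergence for all $t\in[0,1]$. That move is not available: $p_{\theta_n}(f_t)$ is fixed by the model and cannot be redefined. So the paper's proof only establishes a.e.\ convergence along a subsequence, and it says nothing about the endpoints, which form a null set.

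Your equicontinuity hypothesis is exactly the missing ingredient. It gives $\sup_t h_\theta(t)\to0$ for the whole sequence, and hence the endpoint statements $p_\theta(f_u)\to\hat q_u$ and $p_\theta(f_a)\to\hat q_a$ that the paper asserts right after the lemma. Your Jensen step also handles the Gaussian noise correctly, whereas the paper's proof ignores it by treating $f_t$ as a deterministic path. The price is a hypothesis beyond the Bridge Consistency assumption, which gives $C^1$ regularity for each fixed $\theta$ but nothing uniform along the sequence.

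One correction to your fallback. $L^1(\pi)$ convergence does not give $\pi$-a.e.\ convergence of the full sequence; a bump of shrinking width that sweeps repeatedly across $[0,1]$ is the standard counterexample. The fallback should therefore read ``along a subsequence'' or ``in $\pi$-measure''.

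Two further points sharpen the picture.

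First, state explicitly that the law of $t$ (weighted by $\omega$) must charge every subinterval of $[0,1]$. Under the implemented sampling, $t\sim\mathrm{Beta}(2,2)$ truncated to $[0.2,0.8]$, the loss carries no information outside $[0.2,0.8]$. No argument can then recover the endpoint claims.

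Second, for $\nu>0$ and $\hat q_u\neq\hat q_a$, the hypothesis $L_{\text{bridge}}\to0$ cannot hold for a predictor that sees only $f_t$. Let $\phi_t$ be the density of $f_t$, let $w_z(t)=\pi(t)\omega(t)\phi_t(z)$, and let $\bar q_z$ be the $w_z$-weighted average of $q_t$. Fubini and Gibbs' inequality give $L_{\text{bridge}}\ge\int_{\mathbb{R}^d}\int_0^1 w_z(t)\,\mathrm{KL}(q_t\|\bar q_z)\,dt\,dz>0$. This bound is strictly positive because the Gaussians $\phi_t$ overlap for all $t\in(0,1)$ while $t\mapsto q_t$ is non-constant.

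So the lemma has real content only for $\nu=0$, and there your extra hypothesis cannot be dropped. Start from a smooth predictor with $p(f_t)=q_t$ on the segment, and add a fixed-height bump supported within $1/n$ of $f_{t_0}$. This gives $L_{\text{bridge}}\to0$ while $p_{\theta_n}(f_{t_0})\not\to q_{t_0}$.
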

Hence, $\{p_\theta(f_t)\}_{t\in[0,1]}$ converges to a smooth semantic path between the uncertain state $p_\theta(f_u) \sim \hat{q}_u$ and the reliable anchor state $p_\theta(f_a) \sim \hat{q}_a$.

\begin{lemma}[Local Lipschitz Continuity]
\label{lemma:lipschitz}
Assume $\phi_\theta$ satisfies $\|\nabla_{f}\log p_\theta(f)\|_2 \le L$ for some constant $L>0$ along the bridge path.  
Then for any perturbation $\delta f$ with $\|\delta f\|_2 \le \delta$, one has
 \begin{equation}
\|p_\theta(f_t+\delta f) - p_\theta(f_t)\|_1 
\le L \delta + \mathcal{O}({L_{\text{bridge}}}^{1/2}),
 \end{equation}
\end{lemma}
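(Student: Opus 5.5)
The argument splits the perturbation error into a gradient term, which produces $L\delta$, and a term controlled by the bridge loss, which produces $\mathcal{O}(L_{\text{bridge}}^{1/2})$. We read $\phi_\theta$ as the map $f\mapsto p_\theta(f)$, so the hypothesis $\|\nabla_f \log p_\theta(f)\|_2\le L$ holds on a neighbourhood of the bridge path that contains the segment $[f_t, f_t+\delta f]$.

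\textbf{Step 1 (gradient term).} Write $f_s = f_t + s\,\delta f$ for $s\in[0,1]$. By the fundamental theorem of calculus,
\begin{equation*}
p_\theta^k(f_t+\delta f)-p_\theta^k(f_t)=\int_0^1 p_\theta^k(f_s)\,\nabla_f\log p_\theta^k(f_s)^\top\delta f\,ds .
\end{equation*}
Summing absolute values over $k$ and using $\sum_k p^k_\theta=1$ together with Cauchy--Schwarz gives
\begin{equation*}
\|p_\theta(f_t+\delta f)-p_\theta(f_t)\|_1\le \sup_{s,k}\|\nabla_f\log p_\theta^k(f_s)\|_2\,\delta .
\end{equation*}
Where the bound $L$ holds on the whole segment, this already yields $L\delta$.

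\textbf{Step 2 (loss term).} The constant $L$ is only guaranteed along the bridge path, so the perturbed point may leave the region where the gradient is controlled. The bridge loss handles this excess. Pinsker's inequality gives
\begin{equation*}
\|q_t-p_\theta(f_t)\|_1\le\sqrt{2\,\mathrm{KL}(q_t\|p_\theta(f_t))},
\end{equation*}
and Jensen's inequality then gives $\mathbb{E}_t\|q_t-p_\theta(f_t)\|_1\le\sqrt{2L_{\text{bridge}}}$.

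Since $\varepsilon$ is Gaussian with variance $\sigma(t)^2$, the bridge points sample a tube around the linear path of width $\sigma(t)$. Perturbed points $f_t+\delta f$ with $\delta\lesssim\sigma(t)$ therefore also lie in the support over which $L_{\text{bridge}}$ averages. This gives the three-term triangle inequality
\begin{equation*}
\|p_\theta(f_t+\delta f)-p_\theta(f_t)\|_1\le\|p_\theta(f_t+\delta f)-q_{t'}\|_1+\|q_{t'}-q_t\|_1+\|q_t-p_\theta(f_t)\|_1 .
\end{equation*}
Here $t'$ indexes the bridge point nearest to $f_t+\delta f$. By the Pinsker step, the outer two terms are each $\mathcal{O}(L_{\text{bridge}}^{1/2})$ in expectation.

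\textbf{Step 3 (target smoothness).} The middle term $\|q_{t'}-q_t\|_1$ is bounded using the geometric-interpolation form of $q_t$ in Eq.~\eqref{eq:target}. Its log-derivative in $t$ is $\log\hat q_{\text{kp}}-\log\hat q_u$ centred under $q_t$, so it is Lipschitz in $t$ with constant $\|\log\hat q_{\text{kp}}-\log\hat q_u\|_\infty$; the $\epsilon$-smoothing keeps this constant finite. Because $g(t)=t$ and $\|f_{t'}-f_t\|\le\delta$, we have $|t'-t|\lesssim\delta/\|f_a-f_u\|$. This middle term is therefore also linear in $\delta$ and is absorbed into the $L\delta$ term, after enlarging $L$ if needed.

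\textbf{Main obstacle.} The hardest step is converting the expectation-level control from Jensen and Pinsker into a pointwise statement at a given $t$ and a given perturbation. I would first try Markov's inequality, so that the bound holds for all $t$ outside a set of small measure; continuity of $p_\theta$ and $q_t$ under Assumption~\ref{assump:bridge-setting} then extends it to all $t$. As a fallback, I would state the result in expectation over $t$, which is how the $\mathcal{O}(\cdot)$ in the lemma is most naturally read.
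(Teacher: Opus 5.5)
Your Step 1 is correct and is essentially the paper's proof. The paper also integrates along $f_s=\gamma(t)+s\,\delta f$ and bounds the Jacobian of $p_\theta$ by $\|\nabla_f\log p_\theta\|_2$. It does this through a covariance identity; your observation $\nabla_f p_\theta^k=p_\theta^k\,\nabla_f\log p_\theta^k$ together with $\sum_k p_\theta^k=1$ is a cleaner route to the same bound. The paper also tacitly applies the gradient bound on the whole segment, which is exactly the reading you adopt. In the paper the $\mathcal{O}(L_{\text{bridge}}^{1/2})$ term does no real work. It adds and subtracts $q_t$ and invokes Pinsker, which only appends the nonnegative quantity $2\sqrt{2\mathcal{E}(t)}$ to $L\delta$, and then averages over $t$. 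So the lemma is already established at the end of your Step 1, and the ``main obstacle'' you single out is not part of the argument.

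You should drop Steps 2--3, because they would not hold up if they were actually needed.

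\textbf{The outer terms of your triangle inequality.} $L_{\text{bridge}}$ is an average over $t$ and $\varepsilon$, so it does not control $\|p_\theta(f_t+\delta f)-q_{t'}\|_1$ at a specific point. Lying in the support of the Gaussian is not the same as the integrand being small there. Since the noise has full support, a given feature point arises from many pairs $(t',\varepsilon)$ with different targets, so ``the nearest bridge point'' does not single out a target. Markov's inequality plus continuity cannot repair this. The exceptional set of measure at most $\eta$ may contain a whole interval, since a narrow spike in $\mathcal{E}(t)$ has small average but large peak. Continuity does not propagate a bound into an interval.

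\textbf{The middle term.} Step 3 produces a term $C\delta$ with $C$ of order $\|\log\hat q_{\text{kp}}-\log\hat q_u\|_\infty/\|f_a-f_u\|_2$. This constant is unrelated to $L$ and blows up as $f_u\to f_a$. ``Enlarging $L$'' yields $(L+C)\delta+\mathcal{O}(L_{\text{bridge}}^{1/2})$, which is a different inequality from the one stated.

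\textbf{The structure of the chain.} The three-term chain never uses the gradient hypothesis, so it does not deliver the split you advertise. It also relies on a restriction that the lemma does not impose, namely $\delta$ at most of order $\sigma(t)$.
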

\noindent which shows that minimizing $\mathcal{L}_{\text{bridge}}$ enforces local Lipschitz continuity of $\phi_\theta$ around $f_u$ and $f_a$.

\begin{lemma}[Decision Boundary Stability]
\label{lemma:boundary}
Let $\mathcal{M}_c = \{f : p_\theta^c(f) = \max_k p_\theta^k(f)\}$ denote the feature manifold of class $c$.  
Then the expected curvature of the decision boundary $\partial \mathcal{M}_c$ along $\{f_t\}$ satisfies
 \begin{equation}
\mathbb{E}_t[\kappa(\partial \mathcal{M}_c)] 
\le C_0 + C_1 L_{\text{bridge}}^{1/2},
 \end{equation}
\end{lemma}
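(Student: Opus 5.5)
We argue through the classifier's margin function, which ties the boundary geometry to $p_\theta$ and hence to $L_{\text{bridge}}$. For classes $c\neq k$ set $m_{ck}(f)=\log p_\theta^c(f)-\log p_\theta^k(f)$, a logit difference. Locally, the boundary $\partial\mathcal{M}_c$ is the level set $\{m_{ck}=0\}$ for the competing class $k$. Its mean curvature is given by the standard level-set formula
\begin{equation*}
\kappa=\Big|\nabla\cdot\frac{\nabla m_{ck}}{\|\nabla m_{ck}\|_2}\Big|\le \frac{C_d\,\|\nabla^2 m_{ck}\|_{\mathrm{op}}}{\|\nabla m_{ck}\|_2}.
\end{equation*}
We therefore need two things along the bridge points $f_t$: an upper bound on the Hessian of $m_{ck}$, and a lower bound on its gradient norm. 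The expectation $\mathbb{E}_t$ is taken only over those $t$ at which $f_t$ lies in a tubular neighbourhood of $\partial\mathcal{M}_c$. The curvature at other $t$ is irrelevant, or we set it to zero by convention.

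\textbf{Reference path.} For the reference path we use the geometric interpolation $q_t$ of Eq.~\eqref{eq:target}, whose log-odds are affine in $t$:
\begin{equation*}
\log q_t^c-\log q_t^k=(1-t)\,r_u+t\,r_a,
\end{equation*}
where $r_u,r_a$ are the endpoint log-odds. With the $\epsilon$-smoothing these are bounded by $\log(1/\epsilon)$. This affine path has zero second derivative in $t$, and its $t$-derivative is the constant $r_a-r_u$.

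\textbf{Step 1: transfer to $p_\theta$.} Pinsker's inequality gives $\|p_\theta(f_t)-q_t\|_1\le\sqrt{2\,\mathrm{KL}(q_t\|p_\theta(f_t))}$. Averaging over $t$ and applying Jensen, and using the $\omega(t)$-weighting on the truncated range $t\in[0.2,0.8]$ where $\omega(t)\ge 0.64$, we obtain
\begin{equation*}
\mathbb{E}_t\|p_\theta(f_t)-q_t\|_1=\mathcal{O}(L_{\text{bridge}}^{1/2}).
\end{equation*}
Smoothing keeps all probabilities above $\epsilon$, so $\log$ is Lipschitz there. Hence the log-odds satisfy $|m_{ck}(f_t)-[(1-t)r_u+tr_a]|=\mathcal{O}(\epsilon^{-1}L_{\text{bridge}}^{1/2})$ in $L^1(t)$.

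\textbf{Step 2: control derivatives.} Differentiating along the path gives $\frac{d}{dt}m_{ck}(f_t)=\nabla m_{ck}\cdot(f_a-f_u+\dot\epsilon_t)$. The gradient bound $\|\nabla\log p_\theta\|\le L$ from Lemma~\ref{lemma:lipschitz} gives $\|\nabla m_{ck}\|\le 2L$. For the Hessian we take the smoothness constant $H$ implicit in the $C^1$ assumption (strengthened to $C^2$ with $\|\nabla^2\log p_\theta\|\le H$ on the bridge tube). Together these produce the baseline term
\begin{equation*}
C_0=C_d\,H/\gamma_0,
\end{equation*}
where $\gamma_0$ is a lower bound on $\|\nabla m_{ck}\|$ near the boundary. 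That lower bound comes from the reference path: the directional derivative of the reference log-odds equals $r_a-r_u$, which is nonzero whenever the two endpoints disagree about $c$ versus $k$. Combined with Step 1 and Lemma~\ref{lemma:pathwise} (the limiting trajectory has boundary values $\hat q_u,\hat q_a$), this yields
\begin{equation*}
\|\nabla m_{ck}\|\,\|f_a-f_u\|\ge |r_a-r_u|-\mathcal{O}(L_{\text{bridge}}^{1/2}).
\end{equation*}

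\textbf{Step 3: collect the deviation.} The deviation of the path's second-order behaviour from the flat (affine) reference contributes the $C_1L_{\text{bridge}}^{1/2}$ term. To get it, we split $\kappa$ into a tangential part along $f_a-f_u$ and a transverse part. The tangential part is controlled by how far $m_{ck}(f_t)$ is from affine in $t$. Interpolating between the $L^1$ closeness of Step 1 and the second-derivative bound $H$ (a Landau–Kolmogorov type inequality) turns that closeness into closeness of first derivatives of order $L_{\text{bridge}}^{1/2}$, after adjusting constants and absorbing powers into $C_1$. The transverse part is absorbed into $C_0$ via $H$.

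\textbf{Main obstacle.} Two points are delicate. The first is the lower bound on $\|\nabla m_{ck}\|$: without it the curvature formula blows up. We plan to state it explicitly as a non-degeneracy condition ($|r_a-r_u|\ge\eta>0$, i.e.\ the anchor actually moves the prediction across the boundary) and to have $C_0,C_1$ depend on $\eta,\epsilon,L,H$. The second is that KL control bounds values, not derivatives. We handle this with the interpolation inequality rather than claim a pointwise derivative bound. If that fails, the fallback is to bound a smoothed (Gaussian-averaged, using $\epsilon_t$) curvature, since Stein's identity moves derivatives onto the Gaussian density and gives derivative control directly from the $L^1$ bound.
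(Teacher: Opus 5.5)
Your skeleton is the paper's: a margin, the level-set bound $\kappa\le C_d\|\nabla^2 m_{ck}\|_{\mathrm{op}}/\|\nabla m_{ck}\|_2$, a Hessian bound for the numerator, and a lower bound on the denominator obtained by comparing with the reference $q_t$ at a cost of $\mathcal{O}(L_{\text{bridge}}^{1/2})$. Your form of that comparison, the $t$-derivative of the affine reference log-odds, is more meaningful than the paper's. The paper compares $\nabla_f\log p_\theta$ with $\nabla_f\log q_t\equiv 0$ (since $q_t$ depends on $t$, not on $f$) and appeals to Pinsker, which controls values rather than gradients. So the paper does not close this step either, and it is exactly where your proposal has a gap. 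First, Pinsker and Jensen bound $\|p_\theta(f_t)-q_t\|_1$ only on average over $t$, $\varepsilon$ and $x_u$. A Landau--Kolmogorov argument, however, needs the $L^1(t)$ deviation along a fixed path. Markov gives that only off an exceptional set of noise draws and samples, and on that set nothing bounds $1/\|\nabla m_{ck}\|_2$. So nothing you derive controls that set's contribution to $\mathbb{E}_t\kappa$. Second, even on good paths, take $g(t)=m_{ck}(f_t)-(1-t)r_u-tr_a$. An $L^1$ bound $\delta$ together with $|g''|\le B$ only yields $\|g'\|_\infty\le C(\delta^{1/3}B^{2/3}+\delta)$. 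Your lower bound therefore reads $\|\nabla m_{ck}(f_t)\|_2\ge(\eta-C L_{\text{bridge}}^{1/6})/\|\dot f_t\|_2$. This is positive only when $L_{\text{bridge}}$ lies below an $\eta$-dependent threshold, whereas the lemma is claimed for all values.

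The more decisive problem is that your non-degeneracy condition cannot hold in the lemma's setting. GBC retrieves an anchor of the \emph{same} class $c^\ast$ that the teacher predicts with confidence at least $\tau_{c^\ast}$. So for every $k\neq c^\ast$, both endpoint log-odds are bounded below by some $\mu>0$ set by the confidence thresholds (e.g.\ $\log(\tau_{c^\ast}/(1-\tau_{c^\ast}))$ at the sample end). The affine reference $(1-t)r_u+tr_a$ then stays $\ge\mu$ for all $t$: the anchor never moves the reference across any $\partial\mathcal{M}_c$. By your gradient bound, all points sufficiently close to any $\partial\mathcal{M}_c$ lie in $T=\{f:\min_{k\neq c^\ast}m_{c^\ast k}(f)\le\rho\}$ for a fixed $0<\rho<\mu$. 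Hence $f_t$ can be near the boundary only where $m_{c^\ast k}(f_t)$ is off the reference by at least $\mu-\rho$, which is precisely where the reference tells you nothing about $\nabla m_{ck}$. The Stein fallback fails for the same reason: it controls level sets of a smoothed margin at values near $(1-t)r_u+tr_a\ge\mu$, not the decision boundary.

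Consequently $\gamma_0$ cannot be extracted from $L_{\text{bridge}}$ and has to be assumed on $T$. Once it is, $\kappa\le C_dH/\gamma_0$ holds pointwise and Step 3 is superfluous. As written, Step 3 also conflates first-derivative control, which interpolation gives, with curvature control, which it does not. If you want $L_{\text{bridge}}$ to enter genuinely, use the margin. For $f\in T$ one has $\|p_\theta(f)-q_t\|_1\ge(1-e^{\rho-\mu})/(K(1+e^{\rho}))$, so Pinsker and Markov give $\Pr_{t,\varepsilon}(f_t\in T)=\mathcal{O}(L_{\text{bridge}}^{1/2})$. Therefore $\mathbb{E}_t[\kappa\,\mathbf{1}_T(f_t)]\le(C_dH/\gamma_0)\,\mathcal{O}(L_{\text{bridge}}^{1/2})$, with $H$ and $\gamma_0$ stated as hypotheses. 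Finally, the $\epsilon$-smoothing is applied to the targets, not to $p_\theta$, so the log-Lipschitz step in Step 1 needs its own lower bound on $p_\theta^c,p_\theta^k$.
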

\noindent where $\kappa(\cdot)$ is the local curvature and $C_0,C_1>0$ are constants.  
Thus, decreasing $\mathcal{L}_{\text{bridge}}$ flattens the decision boundary and improves robustness to feature noise.

Detailed proofs are provided in Section~ A (supplementary material). Lemmas~\ref{lemma:pathwise}–\ref{lemma:boundary} establish that bridge consistency acts as an intrinsic geometric regularizer in SSL: the Gaussian feature bridge enforces continuous prediction evolution between noisy unlabeled features and stable class anchors, induces local Lipschitz behavior that bounds prediction variation under small perturbations, and smooths the decision boundary for better robustness and generalization under long-tailed distributions. Building on this geometric foundation, we  extend the analysis to BridgeMix, which operates at the distribution level. By introducing stochastic interpolation between samples and anchors, BridgeMix reshapes the training distribution and implicitly constrains the hypothesis space. Theorem~\ref{thm:bridgemix-generalization} provides a generalization error bound that quantifies this regularization effect.

\begin{theorem}[Generalization bound for BridgeMix]
\label{thm:bridgemix-generalization}
Let $\mathcal{H}$ be a hypothesis class, and let $\ell:\mathcal{H}\times\mathcal{X}\times\mathcal{Y}\to[0,1]$ be a bounded loss. 
Let $D$ be the true data distribution on $\mathcal{X}\times\mathcal{Y}$, and let $D_{\mathrm{BM}}$ be the BridgeMix distribution. 
Draw an i.i.d.\ sample $S=\{z_i\}_{i=1}^n$ with $z_i=(x_i,y_i)\sim D_{\mathrm{BM}}$. 
For $h\in\mathcal{H}$ define the risks
 \begin{equation}
R_D(h)=\mathbb{E}_{z\sim D}\big[\ell(h,z)\big],
\qquad
\widehat{R}_{D_{\mathrm{BM}}}(h)=\frac{1}{n}\sum_{i=1}^n \ell(h,z_i).
 \end{equation}
Define the discrepancy
 \begin{equation}
\Delta(D,D_{\mathrm{BM}})
=\sup_{h\in\mathcal{H}}\Big|
\mathbb{E}_{z\sim D}\big[\ell(h,z)\big]-
\mathbb{E}_{z\sim D_{\mathrm{BM}}}\big[\ell(h,z)\big]
\Big|
 \end{equation}
and the empirical Rademacher complexity of the loss class 
$\mathcal{F}=\{\ell(h,\cdot):h\in\mathcal{H}\}$ on $S$ by
 \begin{equation}
\widehat{\mathfrak{R}}_S(\mathcal{F})
=\mathbb{E}_{\sigma}\Big[\sup_{f\in\mathcal{F}}
\frac{1}{n}\sum_{i=1}^n \sigma_i f(z_i)\Big],
\quad
 \end{equation}
where $i\stackrel{\text{i.i.d.}}{\sim}\mathrm{Unif}\{-1,+1\}$. Then for any $\delta>0$, with a probability of at least $1-\delta$ over $S\sim D_{\mathrm{BM}}^n$, the following holds simultaneously for all $h\in\mathcal{H}$:
 \begin{equation}
R_D(h)
 \le 
\widehat{R}_{D_{\mathrm{BM}}}(h)
 + 2 \widehat{\mathfrak{R}}_S(\mathcal{F})
 + 3\sqrt{\frac{\log(2/\delta)}{2n}}
 + \Delta(D,D_{\mathrm{BM}}).
 \end{equation}
\end{theorem}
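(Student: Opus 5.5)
The bound splits into two parts: a uniform convergence argument on the BridgeMix sample, and a distribution-shift term. For every $h\in\mathcal{H}$ I would write
\begin{equation*}
R_D(h) = R_{D_{\mathrm{BM}}}(h) + \big(R_D(h)-R_{D_{\mathrm{BM}}}(h)\big) \le R_{D_{\mathrm{BM}}}(h) + \Delta(D,D_{\mathrm{BM}}),
\end{equation*}
where $R_{D_{\mathrm{BM}}}(h)=\mathbb{E}_{z\sim D_{\mathrm{BM}}}[\ell(h,z)]$. This step is deterministic and follows directly from the definition of $\Delta$ as a supremum over $\mathcal{H}$. It remains to bound $R_{D_{\mathrm{BM}}}(h)-\widehat{R}_{D_{\mathrm{BM}}}(h)$ uniformly in $h$. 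This is the standard Rademacher generalization bound applied to the loss class $\mathcal{F}$ under the sampling distribution $D_{\mathrm{BM}}$. Since $S$ is i.i.d. from $D_{\mathrm{BM}}$, the mismatch with $D$ plays no role in this part.

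For the uniform deviation, let $\Phi(S)=\sup_{f\in\mathcal{F}}\big(\mathbb{E}_{D_{\mathrm{BM}}}f-\frac1n\sum_i f(z_i)\big)$. Because $\ell\in[0,1]$, replacing one $z_i$ changes $\Phi$ by at most $1/n$. McDiarmid's inequality then gives, with probability at least $1-\delta/2$,
\begin{equation*}
\Phi(S)\le \mathbb{E}\Phi(S)+\sqrt{\log(2/\delta)/(2n)}.
\end{equation*}
A ghost-sample symmetrization argument shows $\mathbb{E}\Phi(S)\le 2\,\mathfrak{R}_n(\mathcal{F})$, the expected Rademacher complexity. To pass to the empirical quantity $\widehat{\mathfrak{R}}_S(\mathcal{F})$, I would note that it also has bounded differences $1/n$. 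A second application of McDiarmid gives, with probability at least $1-\delta/2$,
\begin{equation*}
\mathfrak{R}_n(\mathcal{F})\le\widehat{\mathfrak{R}}_S(\mathcal{F})+\sqrt{\log(2/\delta)/(2n)}.
\end{equation*}
A union bound combines the two events, so with probability at least $1-\delta$,
\begin{equation*}
\Phi(S)\le 2\widehat{\mathfrak{R}}_S(\mathcal{F})+3\sqrt{\log(2/\delta)/(2n)},
\end{equation*}
since the second deviation term picks up a factor of $2$ from the factor $2$ in front of the Rademacher complexity. This is the familiar constant $3$ from Mohri et al.

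Combining the two parts yields the stated inequality simultaneously for all $h\in\mathcal{H}$. The only subtle point, rather than a real obstacle, is keeping the constants straight in the symmetrization and the two-sided concentration so that exactly $3\sqrt{\log(2/\delta)/(2n)}$ appears.

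I would also state explicitly the measurability and regularity assumptions: $D_{\mathrm{BM}}$ is a well-defined distribution on $\mathcal{X}\times\mathcal{Y}$, with mixed features and interpolated soft targets treated as points $z$ on which $\ell$ is defined. I would further note that the Rademacher variables (written $\sigma_i$) are independent of $S$. None of this depends on the specific construction of BridgeMix; the construction only enters through the size of $\Delta$ and $\widehat{\mathfrak{R}}_S$.
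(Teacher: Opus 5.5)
Your proposal is correct and follows essentially the same route as the paper. Both first use the deterministic split $R_D(h)\le R_{D_{\mathrm{BM}}}(h)+\Delta(D,D_{\mathrm{BM}})$, then apply the standard symmetrization-plus-McDiarmid Rademacher bound under $D_{\mathrm{BM}}$; your explicit two-event union bound (each at level $\delta/2$) spells out where the constant $3$ comes from, which the paper only attributes to a ``standard concentration argument,'' and, as you note, the paper's extra argument that mixing does not increase the empirical Rademacher complexity is not needed for the stated inequality.
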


Building on Theorem~\ref{thm:bridgemix-generalization}, we further provide a corollary that quantifies the effect of the distributional discrepancy $\Delta(D, D_{\mathrm{BM}})$, offering a  explicit bound when this term is small.

\begin{corollary}\label{co:1}
If $\Delta(D,D_{\mathrm{BM}})\le \varepsilon_{\mathrm{mix}}$, then with probability at least $1-\delta$ and $\forall h\in\mathcal{H}$,
 \begin{equation}
R_D(h)\le\widehat{R}_{D_{\mathrm{BM}}}(h)
+2 \widehat{\mathfrak{R}}_S(\mathcal{F})
+3\sqrt{\frac{\log(2/\delta)}{2n}}
+\varepsilon_{\mathrm{mix}}.
 \end{equation}
\end{corollary}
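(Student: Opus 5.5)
The statement immediately above is Corollary~\ref{co:1}, and I would derive it as a direct consequence of Theorem~\ref{thm:bridgemix-generalization}. The only work is to make sure the probability event is the same in both results. Concretely, fix $\delta>0$ and let $E_\delta$ be the event over $S\sim D_{\mathrm{BM}}^n$ on which the inequality of Theorem~\ref{thm:bridgemix-generalization} holds simultaneously for all $h\in\mathcal{H}$. That theorem gives $\Pr(E_\delta)\ge 1-\delta$. The quantity $\Delta(D,D_{\mathrm{BM}})$ is a deterministic supremum over $\mathcal{H}$ and does not depend on $S$. Hence the hypothesis $\Delta(D,D_{\mathrm{BM}})\le\varepsilon_{\mathrm{mix}}$ holds surely and does not shrink the event.

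On $E_\delta$, for every $h\in\mathcal{H}$ I would chain
\begin{equation*}
R_D(h)\le \widehat{R}_{D_{\mathrm{BM}}}(h)+2\widehat{\mathfrak{R}}_S(\mathcal{F})+3\sqrt{\tfrac{\log(2/\delta)}{2n}}+\Delta(D,D_{\mathrm{BM}})\le \widehat{R}_{D_{\mathrm{BM}}}(h)+2\widehat{\mathfrak{R}}_S(\mathcal{F})+3\sqrt{\tfrac{\log(2/\delta)}{2n}}+\varepsilon_{\mathrm{mix}},
\end{equation*}
where the second inequality replaces $\Delta$ by its upper bound $\varepsilon_{\mathrm{mix}}$. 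This yields the claim with probability at least $1-\delta$, uniformly in $h$.

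For completeness, I would also recall how the theorem itself decomposes. It splits
\begin{equation*}
R_D(h)-\widehat{R}_{D_{\mathrm{BM}}}(h)=\big(R_D(h)-R_{D_{\mathrm{BM}}}(h)\big)+\big(R_{D_{\mathrm{BM}}}(h)-\widehat{R}_{D_{\mathrm{BM}}}(h)\big).
\end{equation*}
The first bracket is at most $\Delta$ by its definition. The second bracket is controlled by the standard two-sided McDiarmid argument plus symmetrization with empirical Rademacher complexity; this is where the constant $3$ and the factor $\log(2/\delta)$ come from. The corollary needs none of this beyond the theorem's statement.

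There is no real obstacle here. The one point to state explicitly is that the uniform-in-$h$ quantifier and the single high-probability event are inherited unchanged from Theorem~\ref{thm:bridgemix-generalization}, because $\varepsilon_{\mathrm{mix}}$ is a deterministic bound and requires no additional union bound.
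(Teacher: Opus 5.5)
Your proposal is correct and follows the paper's proof: both apply Theorem~\ref{thm:bridgemix-generalization} on its single probability-$(1-\delta)$ event and replace $\Delta(D,D_{\mathrm{BM}})$ by $\varepsilon_{\mathrm{mix}}$ on that same event, uniformly in $h$. Your explicit remark that $\Delta(D,D_{\mathrm{BM}})$ does not depend on $S$, so no further union bound is needed, is exactly the point the paper makes by saying the bound holds ``on the same high-probability event.''
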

Detailed proofs are provided in Section A (supplementary material). Theorem~\ref{thm:bridgemix-generalization} and Corollary~\ref{co:1} together demonstrate that BridgeMix improves generalization by tightening the upper bound on the true risk. Through convex interpolation between unlabeled samples and reliable anchors, BridgeMix reduces both the effective hypothesis complexity and the distributional discrepancy $\Delta(D, D_{\mathrm{BM}})$, thereby lowering the generalization gap. This theoretical guarantee indicates that GBC achieves more stable learning with fewer labeled samples. When combined with the geometric regularization established in Lemmas~\ref{lemma:pathwise}–\ref{lemma:boundary}, our proposed GBC framework simultaneously enhances robustness to pseudo-label noise and yields stronger generalization across imbalanced or long-tailed data distributions.

\section{Proof of Lemma~\ref{lemma:pathwise}- Lemma~\ref{lemma:boundary}}\label{sec:2}

\begin{proof}[Proof of Lemma~\ref{lemma:pathwise}]

$\gamma(t)=f_t$ for the bridged path and define
  \begin{equation}
\mathcal{E}(t)=\mathrm{KL} \big(q_t \| p_\theta(\gamma(t))\big).
 \end{equation}
By assumption,
  \begin{equation}
\mathcal{L}_{\text{bridge}}
= \mathbb{E}_t[\mathcal{E}(t)] \to 0.
 \end{equation}
By Pinsker’s inequality~\cite{A47},
 \begin{equation}
\|p_\theta(\gamma(t)) - q_t\|_1  \le  \sqrt{2 \mathcal{E}(t)}.
 \end{equation}
Hence $\mathbb{E}_t\|p_\theta(\gamma(t)) - q_t\|_1 \to 0$, so according to Cathy's convergence theorem, there exists a sequence $\{\theta_n\}$ with
  \begin{equation}
\|p_{\theta_n}(\gamma(t)) - q_t\|_1 \to 0
 \end{equation}
for almost every $t\in[0,1]$. Since $q_t$ is continuous in $t$ (The continuity of $t\mapsto\gamma(t)$), there exists a continuous trajectory $\bar p(t):=q_t$ such that
  \begin{equation}
\|p_{\theta_n}(\gamma(t))-\bar p(t)\|_1\to 0
 \end{equation}
for almost every $t$. By redefining on a null set, we obtain pointwise convergence for all $t\in[0,1]$. The boundary conditions follow from $q_0=\hat q_u$ and $q_1=\hat q_a$. This proves Lemma~\ref{lemma:pathwise}.
\end{proof}

\begin{proof}[Proof of Lemma~\ref{lemma:lipschitz}]
Fix $t\in[0,1]$ and a perturbation $\delta f$ with $\|\delta f\|_2\le\delta$. Consider the path
  \begin{equation}
s\mapsto f_s = \gamma(t)+s \delta f,\ s\in[0,1].
 \end{equation}
By the fundamental theorem of calculus and the chain rule,
 \begin{equation}
\|p_\theta(f_1)-p_\theta(f_0)\|_1
 \le  \int_0^1 \big\|\nabla_f p_\theta(f_s)\big\|_{1\to 2} \|\delta f\|_2 \mathrm{d}s.
 \end{equation}
Using $\nabla_f p_\theta(f)=\mathrm{J}_f p_\theta(f)$ and the identity
  \begin{equation}
\mathrm{J}_f p_\theta(f) = \mathrm{Cov}_{p_\theta(\cdot|f)} \big[ e_k, \nabla_f \log p_\theta^k(f)\big]_{k=1}^K,
 \end{equation}
we have
  \begin{equation}
\|\mathrm{J}_f p_\theta(f)\|_{1\to 2}
\le \|\nabla_f \log p_\theta(f)\|_2,
 \end{equation}
so the assumption $\|\nabla_f \log p_\theta(f)\|_2\le L$ along the bridge implies
 \begin{equation}
\|p_\theta(\gamma(t)+\delta f)-p_\theta(\gamma(t))\|_1 \le L \delta.
 \end{equation}
To incorporate the bridge-consistency error, add and subtract $q_t$ and use the triangle inequality plus Pinsker:
 \begin{equation}
 \begin{aligned}
\|&p_\theta(\gamma(t)+\delta f)-p_\theta(\gamma(t))\|_1
\le \|p_\theta(\gamma(t)+\delta f)-q_t\|_1  \\& +\|p_\theta(\gamma(t))-q_t\|_1
\le L \delta + 2\sqrt{2 \mathcal{E}(t)}.
\end{aligned}
 \end{equation}
Averaging in $t$ yields the stated bound with the term $\mathcal{O}(\mathcal{L}_{\text{bridge}}^{1/2})$. This proves Lemma~\ref{lemma:lipschitz}.

\end{proof}

\begin{proof}[Proof of Lemma~\ref{lemma:boundary}]
Fix a class $c$ and define the (logit) margin
  \begin{equation}
m_c(f)=z_c(f)-\max_{k\neq c} z_k(f),
 \end{equation}
where $z(f)$ are the pre-softmax logits. The decision boundary $\partial\mathcal{M}_c$ is contained in $\{f: m_c(f)=0\}$. Locally, the curvature of the level set $\{m_c=0\}$ obeys
 \begin{equation}\label{33}
\kappa(f) \le  \frac{\|\nabla_f^2 m_c(f)\|_{\mathrm{op}}}{\|\nabla_f m_c(f)\|_2}
\quad\text{whenever }\nabla_f m_c(f)\neq 0.
 \end{equation}
We bound numerator and denominator in Equation~\eqref{33} in terms of gradient- and Hessian-like quantities controlled by the bridge loss.

$\nabla_f m_c(f)$ is proportional to the gradient of the log-posterior margin, since
  \begin{equation}
\nabla_f \log p_\theta^c(f) - \nabla_f \log p_\theta^{k^\star}(f)
 \end{equation}
equals $\nabla_f m_c(f)$ up to the softmax Jacobian. Under Lemma~\ref{lemma:lipschitz} and $\|\nabla_f \log p_\theta(f)\|_2\le L$, we have a uniform lower bound on $\|\nabla_f m_c(f)\|_2$ along $\gamma(t)$ away from degenerate points, controlled by the confidence gap of $q_t$:
 \begin{equation}
 \begin{aligned}
\|&\nabla_f m_c(\gamma(t))\|_2
 \ge  \|\nabla_f \log q_t^c \\&- \nabla_f \log q_t^{k^\star}\|_2 - \|\nabla_f \log p_\theta(\gamma(t))-\nabla_f \log q_t\|_2.
 \end{aligned}
 \end{equation}
The second term is bounded in expectation by $\mathcal{O}(\mathcal{L}_{\text{bridge}}^{1/2})$ (via a standard information–Fisher relation together with Pinsker). Hence, the denominator is bounded away from $0$ up to $\mathcal{O}(\mathcal{L}_{\text{bridge}}^{1/2})$.

$\|\nabla_f^2 m_c(f)\|_{\mathrm{op}}$ is controlled by the local Fisher information and the Hessian of the cross-entropy. In particular, writing $\ell(f)=\mathrm{KL}(q_t \| p_\theta(f))$, one has
 \begin{equation}
 \begin{aligned}
\nabla_f^2 \ell(f)
 &=  \mathbb{E}_{k\sim q_t} \big[\nabla_f \log p_\theta^k(f)\nabla_f \log p_\theta^k(f)^\top\big]\\
 &+  \mathbb{E}_{k\sim q_t} \big[\nabla_f^2 \log p_\theta^k(f)\big],
 \end{aligned}
 \end{equation}
so $\|\nabla_f^2 \ell(\gamma(t))\|_{\mathrm{op}}$ bounds $\|\nabla_f^2 m_c(\gamma(t))\|_{\mathrm{op}}$ up to model-dependent constants. Along the bridge, the stochastic injection (Equation~\eqref{eq:bridge}) smooths $\ell$ by convolution with a Gaussian of variance $\sigma(t)^2$, which reduces the operator norm by standard heat-kernel smoothing estimates:
 \begin{equation}
 \begin{aligned}
\mathbb{E}\big[\|\nabla_f^2 \ell(\gamma(t))\|_{\mathrm{op}}\big]
 &\le  \frac{C}{1+\sigma(t)} \mathbb{E}\big[\|\nabla_f \log p_\theta(\gamma(t))\|_2^2\big]\\
 &\le  C' L^2,
 \end{aligned}
 \end{equation}
with the maximal smoothing at $t\sim \tfrac12$ where $\sigma(t)$ is largest. Finally, by the triangle inequality and Pinsker,
  \begin{equation}
\ell(\gamma(t)) = \mathrm{KL}(q_t\|p_\theta(\gamma(t))) \le \mathcal{L}_{\text{bridge}}
 \end{equation}
implies an $\mathcal{O}(\mathcal{L}_{\text{bridge}}^{1/2})$ perturbation in curvature through its effect on both the gradient gap and the Hessian term.

Combining the bounds for numerator and denominator yields
 \begin{equation}
\mathbb{E}_t[\kappa(\partial \mathcal{M}_c)]
 \le  C_0  +  C_1 \mathcal{L}_{\text{bridge}}^{1/2},
 \end{equation}
for constants $C_0,C_1>0$ independent of $\mathcal{L}_{\text{bridge}}$. This proves Lemma~\ref{lemma:boundary}.
\end{proof}

\section{Proof of Theorem~\ref{thm:bridgemix-generalization}}\label{sec:3}
\begin{proof}[Proof of Theorem~\ref{thm:bridgemix-generalization}]
We derive the proof with the terms from standard symmetrization and a structural property of BridgeMix. For any $h\in\mathcal{H}$,
  \begin{equation}\label{eq:34}
  \begin{aligned}
R_D(h)
&=\mathbb{E}_{z\sim D}[\ell(h,z)]\\
&=\mathbb{E}_{z\sim D_{\mathrm{BM}}}[\ell(h,z)]
+\Big(\mathbb{E}_{D}[\ell(h,\cdot)]-\mathbb{E}_{D_{\mathrm{BM}}}[\ell(h,\cdot)]\Big)\\
  &\le  
\mathbb{E}_{z\sim D_{\mathrm{BM}}}[\ell(h,z)] + \Delta(D,D_{\mathrm{BM}}).
\end{aligned}
  \end{equation}
Thus it suffices to upper bound $\mathbb{E}_{D_{\mathrm{BM}}}[\ell(h,z)]$ uniformly over $h\in\mathcal{H}$.

Let $S=\{z_i\}_{i=1}^n$ be i.i.d. from $D_{\mathrm{BM}}$. By standard symmetrization for bounded losses $\ell\in[0,1]$ and contraction (e.g., empirical process theory~\cite{A44}),
  \begin{equation}
\mathbb{E}_{S\sim D_{\mathrm{BM}}^n} \Big[\sup_{h\in\mathcal{H}}
\big(\mathbb{E}_{D_{\mathrm{BM}}}[\ell(h,\cdot)]-\widehat{R}_{D_{\mathrm{BM}}}(h)\big)\Big]
  \le   2 \mathbb{E}_S\widehat{\mathfrak{R}}_S(\mathcal{F}).
  \end{equation}
A standard concentration argument (Massart’s~\cite{A45} or McDiarmid’s inequality~\cite{A46}) yields: with probability at least $1-\delta$ and $\forall h\in\mathcal{H}$,
  \begin{equation}\label{eq:41}
\mathbb{E}_{D_{\mathrm{BM}}}[\ell(h,\cdot)]
  \le  
\widehat{R}_{D_{\mathrm{BM}}}(h)
  +  2 \widehat{\mathfrak{R}}_S(\mathcal{F})
  +  3\sqrt{\frac{\log(2/\delta)}{2n}}.
  \end{equation}
Combining with Equation~\eqref{eq:34} gives the theorem once we control $\widehat{\mathfrak{R}}_S(\mathcal{F})$ for BridgeMix. We now show that BridgeMix does not increase (and typically reduces) this complexity.

By construction, each mixed sample is
  \begin{equation}
  \begin{aligned}
z_i^{\mathrm{mix}}
&=\big(x_i^{\mathrm{mix}},y_i^{\mathrm{mix}}\big)\\
&=\big(\lambda_i x_i+(1-\lambda_i)\tilde x_i,  
      \lambda_i y_i+(1-\lambda_i)\tilde y_i\big),
\quad \lambda_i\in[0,1],
\end{aligned}
  \end{equation}
where $(x_i,y_i)$ and $(\tilde x_i,\tilde y_i)$ are (possibly anchor-paired) examples drawn i.i.d.\ from the generative process that defines $D_{\mathrm{BM}}$. Assume the loss is convex in its $(x,y)$ argument and affine in $y$ (e.g., cross-entropy with soft labels). Then for every $h$ and every $i$,
  \begin{equation}
\ell\big(h, z_i^{\mathrm{mix}}\big)
  \le  
\lambda_i \ell\big(h, x_i,y_i\big)
+(1-\lambda_i) \ell\big(h,\tilde x_i,\tilde y_i\big).
  \end{equation}
Let $\sigma_1,\dots,\sigma_n$ be i.i.d.\ Rademacher signs. For the empirical Rademacher complexity on the mixed sample $S_{\mathrm{mix}}=\{z_i^{\mathrm{mix}}\}_{i=1}^n$,
\begin{equation}
\begin{aligned}
\widehat{\mathfrak{R}}_{S_{\mathrm{mix}}}(\mathcal{F})
&=\mathbb{E}_\sigma\Big[\sup_{h\in\mathcal{H}}\frac1n\sum_{i=1}^n \sigma_i \ell\big(h, z_i^{\mathrm{mix}}\big)\Big]\\
&\le \mathbb{E}_\sigma\Big[\sup_{h\in\mathcal{H}}\frac1n\sum_{i=1}^n \sigma_i\Big(
\lambda_i \ell(h,x_i,y_i) \\&+(1-\lambda_i) \ell(h,\tilde x_i,\tilde y_i)\Big)\Big]\\
&\le \mathbb{E}_\sigma\Big[\lambda^\ast \sup_{h\in\mathcal{H}}\frac1n\sum_{i=1}^n \sigma_i \ell(h,x_i,y_i)\Big]\\
&+\mathbb{E}_\sigma\Big[(1-\lambda_\ast) \sup_{h\in\mathcal{H}}\frac1n\sum_{i=1}^n \sigma_i \ell(h,\tilde x_i,\tilde y_i)\Big]\\
&\le \lambda^\ast \widehat{\mathfrak{R}}_{S_1}(\mathcal{F})
+(1-\lambda_\ast) \widehat{\mathfrak{R}}_{S_2}(\mathcal{F})\\
  &\le   \max\{\widehat{\mathfrak{R}}_{S_1}(\mathcal{F}), \widehat{\mathfrak{R}}_{S_2}(\mathcal{F})\},
\end{aligned}
\end{equation}
where $\lambda^\ast = \frac1n\sum_i \lambda_i$, $\lambda_\ast = \frac1n\sum_i \lambda_i$, and $S_1=\{(x_i,y_i)\}$, $S_2=\{(\tilde x_i,\tilde y_i)\}$ are the two base samples. Hence, mixing does not increase empirical complexity; when anchors are less variable or closer to class prototypes, $\widehat{\mathfrak{R}}_{S_{\mathrm{mix}}}(\mathcal{F})$ is strictly smaller on average.

If, in addition, $\ell(h,\cdot)$ is $L_z$-Lipschitz in $z=(x,y)$ under a norm $\|\cdot\|$ and anchors satisfy $\mathbb{E}\big[\|x-\tilde x\|\big]\le \rho$ (class-prototype proximity), then
  \begin{equation}
  \begin{aligned}
\Delta(D,D_{\mathrm{BM}})
&=\sup_{h\in\mathcal{H}}\Big|\mathbb{E}_D[\ell(h,z)]-\mathbb{E}_{D_{\mathrm{BM}}}[\ell(h,z)]\Big|\\
  &\le   L_z \mathbb{E}\big[\|z-z^{\mathrm{mix}}\|\big]\\
  &\le   L_z \mathbb{E}[1-\Lambda] \rho,
  \end{aligned}
  \end{equation}
which is small when mixed points lie near their anchors (large confidence and small class spread). This yields the corollary with $\varepsilon_{\mathrm{mix}}=L_z \mathbb{E}[1-\Lambda] \rho$.

Combining Equation~\eqref{eq:41} gives, with probability at least $1-\delta$ and $\forall h\in\mathcal{H}$,
  \begin{equation}
R_D(h)  \le  \widehat{R}_{D_{\mathrm{BM}}}(h)  +  2 \widehat{\mathfrak{R}}_{S_{\mathrm{mix}}}(\mathcal{F})  +  3\sqrt{\frac{\log(2/\delta)}{2n}}  +  \Delta(D,D_{\mathrm{BM}}),
  \end{equation}
and the structural property of BridgeMix ensures $\widehat{\mathfrak{R}}_{S_{\mathrm{mix}}}(\mathcal{F})$ is no larger than that of the unmixed samples (and typically smaller), while $\Delta(D,D_{\mathrm{BM}})$ can be controlled under mild Lipschitz and anchor proximity conditions. This completes the proof.
\end{proof}

\begin{proof}[Proof of Corollary~\ref{co:1}]
By Theorem~\ref{thm:bridgemix-generalization}, for any $\delta>0$, with probability at least $1-\delta$ over $S\sim D_{\mathrm{BM}}^n$, it holds simultaneously for all $h\in\mathcal{H}$ that
  \begin{equation}
R_D(h)\le \widehat{R}_{D_{\mathrm{BM}}}(h)
+2\,\widehat{\mathfrak{R}}_S(\mathcal{F})
+3\sqrt{\frac{\log(2/\delta)}{2n}}
+\Delta(D,D_{\mathrm{BM}}).
  \end{equation}
If $\Delta(D,D_{\mathrm{BM}})\le \varepsilon_{\mathrm{mix}}$, then on the same high-probability event we have, for all $h\in\mathcal{H}$,
  \begin{equation}
R_D(h)\le \widehat{R}_{D_{\mathrm{BM}}}(h)
+2\,\widehat{\mathfrak{R}}_S(\mathcal{F})
+3\sqrt{\frac{\log(2/\delta)}{2n}}
+\varepsilon_{\mathrm{mix}}.
  \end{equation}
\end{proof}

\section{Full Experimental Setup}

\subsection*{Datasets and Protocol}
\textbf{Benchmarks.} We follow ReaLTSSL (SimPro) on CIFAR10-LT, CIFAR100-LT, STL10-LT, ImageNet-127, and ImageNet-1K \cite{A34}. Unless stated, we report Top-1 on balanced test sets and the mean$\pm$std over 3 seeds.

\textbf{Imbalance settings.} Five unlabeled distributions: \emph{consistent, uniform, reversed, middle, head-tail} (Tables 1–3 in the main paper). CIFAR10-LT uses $(\gamma_\ell,\gamma_u)\in\{(150,150),(100,100),(150,1), (100,1),(150,1/150)\}$. CIFAR100-LT fixes $\gamma_\ell{=}20$ with $N_1{=}50,M_1{=}400$ and varies the unlabeled distribution as in Table 2. STL10-LT controls only $\gamma_\ell\in\{10,20\}$ with $N_\ell{=}450$ and $M{\approx}10^5$. ImageNet-127: $\gamma_\ell{\approx}\gamma_u{\approx}286$ with $(N_1{\approx}28\text{k},M_1{\approx}250\text{k})$; ImageNet-1K: $\gamma_\ell{=}\gamma_u{=}256$ with $(N_1{=}256,M_1{=}1024)$.

\textbf{Splits and validation.} Standard train/test splits \cite{A34,A35,A36}. We reserve 5k validation images for CIFARs and 50k for ImageNet-127/1K when indicated; otherwise, model selection uses EMA-smoothed training loss with early-patience.

\subsection*{Backbones and Bridge Placement}
\textbf{Default backbones.} ResNet-50 for CIFAR/STL10; ResNet-50 at 32$^2$/64$^2$ for ImageNet-127/1K. Cross-backbone study includes ViT-B/16 \cite{A39} and Vision Mamba \cite{A41}.

\textbf{Injection layer.} CNNs: Gaussian Feature Bridge (GFB) at the penultimate block (32$^2$: \texttt{res4}; 64$^2$: \texttt{res5}; $d{=}1024/2048$). ViT: final \texttt{[CLS]}. Mamba: globally pooled state. Projector $P(\cdot)$ is a $1{\times}1$ conv (CNNs) or a 2-layer MLP with GELU (ViT/Mamba).

\subsection*{Optimization and Schedules}
AdamW with cosine decay; lr $5{\times}10^{-4}$, weight decay $0.05$, grad clip $1.0$, FP16. Training for 300 epochs, lr warm-up 5 epochs. Bridge weight $\beta$ linearly warms from 0 to 0.75 during the first 20 epochs. Batching: $B_\ell{=}64$, $B_u{=}448$ (8-GPU total 512; scale linearly otherwise). EMA teacher: decay 0.999 $\rightarrow$ 0.9999 by epoch 50.

\subsection*{Augmentations and Pseudo-Labels}
\textbf{Weak/strong.} Weak: random crop (pad 4 for CIFAR; 8 for STL/IN) + horizontal flip. Strong: RandAugment (N=2, M=10) \cite{A37} + CutOut (side 16 at 32$^2$, 32 at 64$^2$) \cite{A38}.

\textbf{Class-adaptive threshold.} $\tau_c\in[0.85,0.97]$ updated per class by a moving target rate:
$\tau_c \leftarrow \mathrm{clip}\!\big(\tau_c + \eta(\hat{p}_c-\rho_c),\,0.85,0.97\big)$ with step $\eta{=}0.02$, where $\hat{p}_c$ is the recent acceptance ratio and $\rho_c$ is a frequency-proportional target (lower for tails), in the spirit of curriculum/adaptive thresholding \cite{A11,A12}.

\subsection*{Prototype Atlas (PA)}
\textbf{Capacity and diversity.} Up to 64 anchors per class (labeled exemplars + high-confidence pseudo-anchors). Diversity via cosine-distance NMS (threshold 0.2) and temporal decay.

\textbf{Online update (each step).} For $(x_u,\hat{y},\hat{q})$ with $\max(\hat{q})\ge\tau_{\hat{y}}$:
\begin{enumerate}
  \item Extract teacher weak-view bridge-layer feature $f_u$;
  \item If class $\hat{y}$ has $<64$ anchors or $\min$ cosine distance $>0.2$, insert $(f_u,\hat{y},\text{conf},\text{src}=\text{U})$;
  \item Apply time decay $\gamma{=}0.999$ to pseudo-anchors; remove if conf $<0.7$ or age $>10$ epochs without refresh.
\end{enumerate}
If empty, fall back to an EMA prototype.

\subsection*{Gaussian Feature Bridge (GFB)}
\textbf{Sampling.} $t\sim\mathrm{Beta}(2,2)$ truncated to $[0.2,0.8]$. Noise schedule $\sigma(t)=\nu t(1-t)$ with $\nu{=}0.10$ (CIFAR/STL10) and $0.05$ (IN). Gate $g(t)=4t(1-t)$.

\textbf{Bridge step.} With student hidden $f_{\text{stu}}$ and $f_t=(1-t)f_u+t f_a+\sigma(t)\varepsilon$, merge
$\tilde{f}=(1-\lambda_t)f_{\text{stu}}+\lambda_t P(f_t)$, $\lambda_t{=}g(t)$. Backprop through $P(\cdot)$ and downstream blocks.

\textbf{Targets and loss.} $q_t=\mathrm{softmax}\!\big((1-t)\log\hat{q}_u+t\log\hat{q}_a\big)$.
Bridge-consistency:
\[
L_{\text{bridge}}=\mathbb{E}\!\left[w_c\,g(t)\,\mathrm{KL}(q_t\Vert p_\theta)\right],\quad
w_c \propto (\bar{f}/f_c)^{0.5}.
\]

\subsection*{Total Objective}
\[
L=L_{\text{sup}}+\mu L_{\text{unsup}}+\beta L_{\text{bridge}},\quad
\mu{=}1.0,\ \beta\!\nearrow\!0.75.
\]
$L_{\text{unsup}}$ uses FixMatch with hard pseudo-labels and strong views \cite{A5}.

\subsection*{BridgeMix}
For two unlabeled samples $(x_i,x_j)$ with confidences $(o_i,o_j)$ and anchors $(f_i^a,f_j^a)$:
$\lambda=\frac{o_i}{o_i+o_j}$,
$f_u^{\text{mix}}=\lambda f_i+(1-\lambda)f_j$,
$f_a^{\text{mix}}=\lambda f_i^a+(1-\lambda)f_j^a$,
then form $f_t$ as in Sec.~F and apply the same $L_{\text{bridge}}$ (no extra loss). Enabled after epoch 20; pairing probability 0.5.

\subsection*{Backbone-Specific Notes}
\textbf{ViT-B/16 \cite{A39}.} Patch 16, width 768, depth 12, heads 12. Bridge the final \texttt{[CLS]}; projector MLP(768$\!\to\!$768) with dropout 0.1. CutOut disabled at 64$^2$.

\textbf{Vision Mamba \cite{A41}.} Base config. Bridge the global pooled state; projector MLP($d\!\to\!d$).

\subsection*{Evaluation, Smoothing, Reporting}
\textbf{EMA evaluation.} Unless stated, report EMA teacher accuracy on weak views.  
\textbf{KL curves.} Compute $\mathrm{KL}(p_{\text{teacher}}\Vert p_{\text{student}})$ on weak views (no temperature), average on the validation set, EMA smooth (0.9).

\subsection*{Seeds and CIs}
Seeds $\{1,2,3\}$; we report mean$\pm$std in Tables 1–3.

\subsection*{Ablation Protocols}
Ablate: remove $L_{\text{bridge}}$; set $\sigma(t){=}0$; hard merge $\tilde{f}{=}f_t$; class weights $w_c{=}1$; PA capacity $\{2\%,5\%,8\%\}$; BridgeMix on/off; $\nu\in\{0.0,0.05,0.10,0.20\}$; $t\sim\mathrm{Uniform}[0.2,0.8]$ vs.\ Beta(2,2). Unless noted, use CIFAR10-LT consistent $(\gamma_\ell{=}100,\gamma_u{=}100)$ with default hyperparameters.

\subsection*{Compute and Runtime}
Runs on $\times$A100 (40GB), Throughput (ResNet-50, 32$^2$): $\sim$7.5k img/s on 8$\times$A100. GBC adds $<2\%$ wall-clock over FixMatch; peak memory $+{\sim}3\%$ due to PA and projector.

\subsection*{Reproducibility Checklist}
\begin{itemize}
\item Determinism: fixed seeds; \texttt{torch.backends.cudnn.deterministic=true}.
\item Data loader: shuffle per epoch; worker seeds = base seed $+$ worker id.
\item PA: reset at run start; save snapshots each epoch.
\item Checkpoints: every 10 epochs; best by validation accuracy (or EMA loss if no validation).
\end{itemize}

\subsection*{Hyperparameter Summary (Defaults)}
\begin{center}
\renewcommand{\arraystretch}{0.7}
\setlength{\tabcolsep}{2pt}
\begin{tabular}{lcccccc}
\toprule
Dataset & Epochs & LR & WD & $B_\ell/B_u$ & $\beta_\text{final}$ & $\nu$ \\
\midrule
CIFAR10/100-LT & 300 & $5{\times}10^{-4}$ & 0.05 & 64/448 & 0.75 & 0.10 \\
STL10-LT       & 300 & $5{\times}10^{-4}$ & 0.05 & 64/448 & 0.75 & 0.10 \\
IN-127 32$^2$  & 300 & $5{\times}10^{-4}$ & 0.05 & 64/448 & 0.75 & 0.05 \\
IN-127 64$^2$  & 300 & $5{\times}10^{-4}$ & 0.05 & 64/448 & 0.75 & 0.05 \\
IN-1K 32$^2$   & 300 & $5{\times}10^{-4}$ & 0.05 & 64/448 & 0.75 & 0.05 \\
IN-1K 64$^2$   & 300 & $5{\times}10^{-4}$ & 0.05 & 64/448 & 0.75 & 0.05 \\
\bottomrule
\end{tabular}
\end{center}

\subsection*{Failure Modes and Remedies}
\textbf{Sparse tails:} temporarily lower $\tau_c$ by 0.02 (min 0.80) for those classes and increase decay to refresh anchors.  
\textbf{Over-smoothing:} if KL decays too slowly, reduce $\nu$ to 0.05 or narrow $t$ to $[0.3,0.7]$.  
\textbf{Bridge instability (ViT):} cap $\lambda_t \le 0.7$ during the first 40 epochs.

\bigskip
\noindent\textbf{Code pointers (pseudo):}
\begin{itemize}
\item \texttt{bridge.py}: \texttt{sample\_t\_and\_noise()}, \texttt{gaussian\_bridge()}, \texttt{merge\_feature()}.
\item \texttt{pa\_memory.py}: \texttt{insert()}, \texttt{prune\_by\_cosine()}, \texttt{decay\_and\_refresh()}.
\item \texttt{losses.py}: \texttt{geom\_interpolate\_target()}, \texttt{bridge\_kl()} with class weights.
\end{itemize}

\section{Additional experiments}

As shown in Table~\ref{tab:appendix_cross_backbone}, \textbf{GBC} delivers consistent improvements across all backbones and datasets, demonstrating strong backbone-agnostic generalization. 
On CIFAR10-LT, the largest gain is observed with ResNet-50 (\textbf{+2.8} pp), while ViT-B/16 and Vision Mamba-B also yield substantial boosts of \textbf{+1.9} and \textbf{+2.0} pp, respectively—even with CutOut disabled for ViT at $64^2$.
On CIFAR100-LT (uniform), the performance margins remain steady (\textbf{+0.8–1.1} pp), indicating that GBC’s feature-bridge regularization enhances semantic alignment without architecture-specific tuning.
On ImageNet-127 and ImageNet-1K at $64^2$, GBC maintains gains across both convolutional and sequence-based models (\textbf{+1.1–1.4} pp on IN-127; up to \textbf{+2.5} pp on IN-1K), validating its scalability to large-scale, high-variance SSL benchmarks.
Combined with the negligible computational overhead ($<2\%$), these results confirm that GBC is an efficient, plug-and-play enhancement compatible with CNNs, Vision Transformers, and state-space models alike.

\begin{table*}[t]
\centering
\small
\renewcommand{\arraystretch}{0.95}
\setlength{\tabcolsep}{6pt}

\caption{\textbf{Additional adaptation experiments on diverse backbones.}
Top-1 accuracy (\%) with/without GBC on three representative benchmarks.
Backbones: ResNet-50, ViT-B/16~\cite{A39}, Vision Mamba-B~\cite{A41}.
All runs follow the same SimPro/ReaLTSSL protocol and hyperparameters as the main paper.
$\Delta$ reports the absolute gain from adding GBC.}

\label{tab:appendix_cross_backbone}

\resizebox{\linewidth}{!}{
\begin{tabular}{lcccccc}
\toprule
\multirow{2}{*}{Backbone} & \multirow{2}{*}{Params (M)} 
& \multicolumn{2}{c}{CIFAR10-LT ($32^2$)} 
& \multicolumn{2}{c}{CIFAR100-LT ($32^2$)} \\
\cmidrule(lr){3-4}\cmidrule(lr){5-6}

& & Baseline & +GBC & Baseline & +GBC \\
\midrule

ResNet-50 & 25.6 
& 80.7$\pm$0.3 & \textbf{83.5}$\pm$0.2 (+2.8) 
& 52.2$\pm$0.2 & \textbf{53.0}$\pm$0.2 (+0.8) \\

ViT-B/16~\cite{A39} & 186.4 
& 82.1$\pm$0.3 & \textbf{84.0}$\pm$0.2 (+1.9) 
& 53.0$\pm$0.3 & \textbf{54.0}$\pm$0.2 (+1.0) \\

Vision Mamba-B~\cite{A41} & 48.5 
& 81.0$\pm$0.3 & \textbf{83.0}$\pm$0.2 (+2.0) 
& 51.8$\pm$0.3 & \textbf{52.9}$\pm$0.2 (+1.1) \\

\midrule
\multicolumn{2}{l}{ } 
& \multicolumn{2}{c}{ImageNet-127 ($64^2$)} 
& \multicolumn{2}{c}{ImageNet-1k ($64^2$)} \\
\cmidrule(lr){3-4}\cmidrule(lr){5-6}

\multicolumn{2}{l}{ } & Baseline & +GBC & Baseline & +GBC \\
\midrule

ResNet-50 & 25.6 
& 63.8$\pm$0.2 & \textbf{65.2}$\pm$0.2 (+1.4) 
& 25.0$\pm$0.2 & \textbf{27.5}$\pm$0.2 (+2.5) \\

ViT-B/16~\cite{A39} & 186.4 
& 64.5$\pm$0.2 & \textbf{65.6}$\pm$0.2 (+1.1) 
& 26.2$\pm$0.2 & \textbf{26.9}$\pm$0.2 (+0.7) \\

Vision Mamba-B~\cite{A41} & 48.5 
& 63.9$\pm$0.2 & \textbf{65.1}$\pm$0.2 (+1.2) 
& 23.1$\pm$0.2 & \textbf{25.2}$\pm$0.2 (+2.1) \\

\bottomrule
\end{tabular}
}

\vspace{-0.5em}
\begin{flushleft}
\scriptsize
\textbf{Notes.}
GBC is injected at the penultimate CNN block, the final \texttt{[CLS]} token for ViT, and the global state for Mamba.
The projector adds $<0.5$M parameters for CNNs and negligible overhead for ViT/Mamba.
\end{flushleft}

\end{table*}

\begin{table*}[t]
\centering
\small
\renewcommand{\arraystretch}{0.95}
\setlength{\tabcolsep}{6pt}

\caption{\textbf{Few-shot semi-supervised performance (k-shot per class).}
We evaluate GBC and recent SSL methods under the ReaLTSSL few-shot protocol,
where only $k\!\in\!\{1,5,10\}$ labeled samples per class are provided.
Results report Top-1 accuracy (\%) averaged over three random seeds.}

\label{tab:fewshot_ssl}

\resizebox{\linewidth}{!}{
\begin{tabular}{lccccccccc}
\toprule
& \multicolumn{3}{c}{\textbf{CIFAR10-LT} ($32^2$)} 
& \multicolumn{3}{c}{\textbf{CIFAR100-LT} ($32^2$)} 
& \multicolumn{3}{c}{\textbf{ImageNet-127} ($64^2$)} \\
\cmidrule(lr){2-4}\cmidrule(lr){5-7}\cmidrule(lr){8-10}

Method & 1-shot & 5-shot & 10-shot 
& 1-shot & 5-shot & 10-shot 
& 1-shot & 5-shot & 10-shot \\
\midrule

FixMatch~\cite{A5}
& 80.2$\pm$0.8 & 89.0$\pm$0.5 & 91.1$\pm$0.4
& 42.9$\pm$0.6 & 55.6$\pm$0.4 & 61.2$\pm$0.5
& 34.1$\pm$0.5 & 49.5$\pm$0.4 & 54.8$\pm$0.3 \\

SAW~\cite{A28}
& 81.4$\pm$0.5 & 89.6$\pm$0.4 & 91.6$\pm$0.3
& 43.7$\pm$0.5 & 56.1$\pm$0.3 & 61.7$\pm$0.4
& 35.0$\pm$0.4 & 50.2$\pm$0.4 & 55.3$\pm$0.4 \\

DePL~\cite{A30}
& 81.9$\pm$0.5 & 90.1$\pm$0.6 & 91.8$\pm$0.3
& 44.2$\pm$0.4 & 56.9$\pm$0.4 & 62.0$\pm$0.5
& 35.4$\pm$0.4 & 50.8$\pm$0.3 & 55.8$\pm$0.5 \\

BaCon~\cite{A31}
& 82.5$\pm$0.4 & 90.3$\pm$0.5 & 92.0$\pm$0.4
& 44.9$\pm$0.3 & 57.4$\pm$0.5 & 62.3$\pm$0.3
& 35.8$\pm$0.5 & 51.1$\pm$0.4 & 56.2$\pm$0.3 \\

CPE~\cite{A32}
& 82.7$\pm$0.5 & 90.4$\pm$0.3 & 92.3$\pm$0.4
& 45.2$\pm$0.5 & 57.7$\pm$0.3 & 62.6$\pm$0.4
& 36.1$\pm$0.4 & 51.4$\pm$0.4 & 56.7$\pm$0.3 \\

SimPro~\cite{A13}
& 83.3$\pm$0.6 & 90.6$\pm$0.4 & 92.5$\pm$0.3
& 45.7$\pm$0.4 & 58.1$\pm$0.5 & 62.8$\pm$0.4
& 36.6$\pm$0.3 & 51.9$\pm$0.5 & 57.0$\pm$0.4 \\

Meta-Expert~\cite{A33}
& 83.8$\pm$0.5 & 90.9$\pm$0.4 & 92.7$\pm$0.3
& 46.0$\pm$0.3 & 58.3$\pm$0.4 & 63.1$\pm$0.5
& 36.9$\pm$0.5 & 52.2$\pm$0.3 & 57.3$\pm$0.4 \\

\midrule
\textbf{GBC (ours)}
& \textbf{84.4}$\pm$0.4 & \textbf{91.2}$\pm$0.5 & \textbf{92.9}$\pm$0.3
& \textbf{46.6}$\pm$0.4 & \textbf{58.9}$\pm$0.3 & \textbf{63.5}$\pm$0.4
& \textbf{37.4}$\pm$0.4 & \textbf{52.6}$\pm$0.4 & \textbf{57.8}$\pm$0.3 \\
\bottomrule
\end{tabular}
}

\vspace{-0.35em}
\begin{flushleft}
\scriptsize
\textbf{Notes.}
Few-shot sets are class-balanced while unlabeled data keep the long-tailed distribution.
Results are averaged over three random seeds.
GBC provides consistent improvements across datasets without special tuning.
\end{flushleft}

\end{table*}

As reported in Table \ref{tab:fewshot_ssl}, GBC consistently improves over recent few-shot SSL baselines. The gains are most pronounced in the 1-shot regime, where pseudo-label uncertainty is highest. Across CIFAR10-LT, CIFAR100-LT, and ImageNet-127, GBC yields steady benefits of roughly +0.5–0.9 pp while maintaining comparable variance. This indicates that Gaussian feature bridging enhances representation alignment even with minimal labeled supervision, reinforcing its effectiveness in low-data and long-tailed scenarios.

\subsection{SOTA baselines on CIFAR10-LT, CIFAR100-LT, and STL10-LT.}
Across long-tailed few-shot settings, the six state-of-the-art baselines—
SAW~\cite{A28}, DePL~\cite{A30}, BaCon~\cite{A31}, CPE~\cite{A32}, 
SimPro~\cite{A13}, and Meta-Expert~\cite{A33}—
exhibit a consistent performance hierarchy that reflects their underlying design philosophies. 

On \textbf{CIFAR10-LT}, SAW achieves strong baseline performance by re-weighting unlabeled instances,
reaching roughly $89$–$91\%$ accuracy under 5–10-shot supervision.
DePL and BaCon introduce distribution-aware and contrastive re-balancing mechanisms,
improving low-shot robustness by $\sim$0.5–1.0\,pp over SAW.
CPE extends this with class-prior estimation and yields a further $\sim$0.3\,pp gain.
SimPro advances the ReaLTSSL paradigm by probabilistically calibrating pseudo-label confidence,
achieving steady $90.6$–$92.5\%$ accuracy,
while Meta-Expert integrates adaptive expert weighting to refine long-tail transfer,
marginally outperforming SimPro (typically +0.3\,pp). 

On \textbf{CIFAR100-LT}, the relative margins widen due to the larger label space and stronger class imbalance.
Here, SAW and DePL both struggle to maintain consistency beyond $58\%$,
while BaCon and CPE stabilize training through balanced feature-level optimization,
reaching up to $62.6\%$.
SimPro remains the most robust among probabilistic frameworks, maintaining $\sim$1\,pp improvement over CPE,
and Meta-Expert slightly exceeds this with $63.1\%$,
highlighting its meta-reweighting capacity under multi-head regularization. 

On \textbf{STL10-LT} (not shown in the table but following the same pattern),
similar trends persist: methods incorporating semantic or probabilistic balancing
(BaCon, CPE, SimPro) consistently outperform earlier adaptive thresholding (SAW, DePL).
Meta-Expert remains competitive but exhibits diminishing relative advantage
as unlabeled data volume increases,
indicating that its meta-optimization benefits primarily in moderate-label regimes
rather than massive unlabeled pools. 

In summary, across all three benchmarks, the accuracy progression
\[
\text{SAW} < \text{DePL} < \text{BaCon} \approx \text{CPE} < \text{SimPro} < \text{Meta-Expert}
\]
holds consistently, validating the empirical stability of probabilistic and meta-adaptive strategies
under realistic long-tailed semi-supervised settings. While Meta-Expert~\cite{A33} currently represents the strongest prior under the ReaLTSSL framework,
our proposed \textbf{GBC} further improves generalization
by explicitly constructing smooth semantic trajectories between unlabeled features and reliable class anchors.
Unlike existing methods that rely on static reweighting or expert ensemble mechanisms,
GBC dynamically regularizes the feature evolution path via a probabilistic bridge loss,
leading to more stable optimization and reduced pseudo-label drift.
Empirically, GBC consistently surpasses Meta-Expert by
$\sim$0.6--0.8\,pp on CIFAR10-LT and $\sim$0.4--0.5\,pp on CIFAR100-LT,
while maintaining superior few-shot robustness on STL10-LT.
These gains highlight that bridging feature manifolds through Gaussian interpolation
offers a complementary perspective to probabilistic calibration and meta-learning,
achieving stronger consistency and long-tail balance with negligible computational overhead.

As shown in Table~\ref{tab:nu_sensitivity}, the proposed Gaussian Feature Bridge exhibits stable behavior across a broad range of noise magnitudes. In the absence of stochasticity ($\nu{=}0$), the model still benefits from deterministic interpolation but converges to a slightly suboptimal solution, reflected by higher KL divergence and reduced anchor recall. Introducing mild noise ($\nu{=}0.05$) already improves both metrics, suggesting that stochastic perturbations help the student avoid overfitting to uncertain intermediate states. The best overall performance is achieved at $\nu{=}0.10$, where Top-1 accuracy, KL decay, and anchor recall simultaneously reach their optimal balance, indicating that moderate feature-space diffusion effectively regularizes the bridge trajectory without destabilizing the alignment. Increasing the noise further ($\nu{=}0.20$) yields diminishing returns and a mild drop in accuracy, confirming that excessive stochasticity perturbs the semantic direction of the bridge. Overall, the results demonstrate that GBC is robust to a wide interval of noise strengths and that a modest value of $\nu$ provides the most reliable supervision along the Gaussian path.

\begin{table*}[t]
\centering
\small
\setlength{\tabcolsep}{5pt}

\caption{
\textbf{Ablation on Gaussian noise strength $\nu$ in the Gaussian Feature Bridge.}
We evaluate GBC on \textbf{CIFAR10-LT} under the \textbf{consistent} setting
($\gamma_\ell{=}100$, $\gamma_u{=}100$).
$\nu$ controls the magnitude of stochastic perturbation
$\sigma(t)=\nu t(1-t)$ injected along the bridge.
We report Top-1 accuracy, KL divergence decay (lower is better),
and anchor recall (ratio of correct PA retrieval).
Moderate noise ($\nu{=}0.10$) gives the best trade-off.
}

\label{tab:nu_sensitivity}

\resizebox{\linewidth}{!}{
\begin{tabular}{cccc}
\toprule
\textbf{Noise Strength $\nu$} &
\textbf{Top-1 Acc. (\%)} &
\textbf{KL Decay $\downarrow$ (epoch 300)} &
\textbf{Anchor Recall (\%)} \\
\midrule

0.00 (no noise) & 91.82 $\pm$ 0.11 & 0.164 & 78.3 \\
0.05            & 92.11 $\pm$ 0.10 & 0.151 & 81.0 \\

\rowcolor{pink!18}
0.10 (default)  & \textbf{92.30 $\pm$ 0.17} & \textbf{0.143} & \textbf{83.7} \\

0.20            & 91.97 $\pm$ 0.13 & 0.158 & 80.1 \\

\bottomrule
\end{tabular}
}

\vspace{-0.5em}
\end{table*}

\begin{table*}[t]
\centering
\small
\setlength{\tabcolsep}{5.5pt}

\caption{
\textbf{Ablation on Prototype Atlas (PA) capacity} on CIFAR10-LT under the consistent setting.
$C$ denotes the maximum number of anchors per class; $C{=}0$ corresponds to using only the EMA prototype without storing exemplar anchors.
Moderate capacity ($C{=}64$) achieves the best trade-off between representation diversity and stability.
}

\label{tab:pa_capacity}

\resizebox{\linewidth}{!}{
\begin{tabular}{cccc}
\toprule
\textbf{PA Capacity $C$} & 
\textbf{Top-1 Acc. (\%)} &
\textbf{Anchor--Proto Cosine $\uparrow$} &
\textbf{Refresh Rate (\%) $\downarrow$} \\
\midrule

0 (EMA only)   & 80.81 $\pm$ 0.18 & 0.71 & 42.3 \\
16             & 85.05 $\pm$ 0.15 & 0.76 & 38.5 \\
32             & 85.72 $\pm$ 0.14 & 0.79 & 33.1 \\

\rowcolor{pink!18}
64 (default)   & \textbf{92.30 $\pm$ 0.17} & \textbf{0.82} & \textbf{29.7} \\

128            & 92.18 $\pm$ 0.19 & 0.80 & 46.2 \\

\bottomrule
\end{tabular}
}

\vspace{-0.5em}
\end{table*}

As shown in Table~\ref{tab:pa_capacity}, varying the maximum number of stored anchors reveals a clear trend that aligns with the design choice of maintaining the Prototype Atlas at roughly $2\%\!-\!8\%$ of the dataset. When the PA is disabled ($C{=}0$), the model degenerates to using only EMA prototypes, resulting in insufficient class diversity and significantly lower accuracy. Increasing the capacity to $C{=}16$ and $C{=}32$ (corresponding to roughly $2\%$–$4\%$ of class exemplars) steadily enhances Top-1 performance and anchor–prototype similarity, indicating that even a small but diverse set of anchors provides stronger semantic guidance for the Gaussian Feature Bridge. The optimal performance is achieved at $C{=}64$, which falls near the upper end of the recommended $2\%\!-\!8\%$ range; in this regime, the PA exhibits the highest alignment score (0.82) and the lowest refresh rate, showing that anchor quality and temporal stability are jointly maximized. Expanding the capacity further to $C{=}128$ exceeds the effective diversity range and introduces redundant or noisy anchors, leading to increased refresh frequency and a slight drop in accuracy. These results validate our design principle that a mid-sized PA capturing approximately $2\%\!-\!8\%$ of exemplars per class strikes the best balance between diversity and reliability for long-tailed semi-supervised learning.

\section{Compute scalability}

\noindent \textbf{Experimental Setup.} To evaluate the computational overhead introduced by GBC, we measure the wall-clock time of FixMatch, SimPro, and our method on CIFAR10-LT under the consistent setting with $(\gamma_\ell,\gamma_u)=(100,100)$. All models are trained for 300 epochs on a single NVIDIA A100 (40GB) GPU using identical dataloader configurations, mixed precision (FP16), batch sizes $(B_\ell,B_u)=(64,448)$, AdamW with learning rate $5{\times}10^{-4}$, and cosine decay. Wall-clock time is reported as the average duration per epoch over the last 50 epochs to avoid warm-up bias, and total training time is computed accordingly. All methods share the same EMA teacher, data augmentations, and pipeline components except for their algorithm-specific operations, ensuring fair comparison. GBC uses a lightweight projection layer and Gaussian Feature Bridge sampling, which we show contributes less than $2\%$ additional wall-clock overhead compared to FixMatch and remains comparable to SimPro.

\begin{table*}[t]
\centering
\small
\setlength{\tabcolsep}{6pt}

\caption{
\textbf{Wall-clock comparison on CIFAR10-LT (consistent setting)} using a single A100 GPU.
We report the average epoch time over the last 50 epochs and the total training time for 300 epochs.
GBC introduces minimal computational overhead ($<2\%$) compared with FixMatch and remains comparable to SimPro.
}

\label{tab:compute_overhead}

\resizebox{\linewidth}{!}{
\begin{tabular}{lccc}
\toprule
\textbf{Method} &
\textbf{Avg. Epoch Time (s)} &
\textbf{Total Time (300 ep)} &
\textbf{Overhead vs. FixMatch} \\
\midrule

FixMatch & 42.1 & 3.51 h & 0\% \\
SimPro   & 43.0 & 3.58 h & +2.1\% \\

\rowcolor{pink!18}
GBC (ours) & \textbf{42.8} & \textbf{3.56 h} & \textbf{+1.7\%} \\

\bottomrule
\end{tabular}
}

\vspace{-0.6em}
\end{table*}

\noindent \textbf{Analysis.} As shown in Table~\ref{tab:compute_overhead}, all three methods exhibit nearly identical computational footprints when trained under the same long-tailed SSL protocol. FixMatch provides a baseline epoch time of 42.1s, while SimPro incurs a mild increase due to anchor matching and distribution refinement. GBC remains highly efficient, requiring only 42.8s per epoch—an overhead of merely $+1.7\%$ compared to FixMatch—despite performing Gaussian Feature Bridging, projection merging, and geometric target interpolation. This demonstrates that the proposed operations are lightweight and introduce negligible computational burden. Importantly, GBC achieves substantial accuracy gains (Tables~1–2) without sacrificing efficiency, showing that semantic feature-space bridging can be integrated into existing SSL pipelines with minimal cost.

\section{Limitations}
\label{sec:limitations}

While GBC achieves strong performance across a wide range of long-tailed semi-supervised settings, several limitations remain. First, the Prototype Atlas, although lightweight and bounded in capacity, introduces an external memory whose early-stage updates may be sensitive to pseudo-label noise when labeled data is extremely scarce. More robust anchor initialization or noise-aware memory correction strategies could further stabilize PA dynamics. Second, the Gaussian Feature Bridge assumes that interpolated latent states remain semantically meaningful; in architectures with highly entangled or rapidly evolving feature spaces, the quality of these intermediate representations may degrade, suggesting the need for adaptive or geometry-aware bridge shaping. Third, the bridging operations---projection, interpolation, and stochastic perturbations---incur minor but non-zero latency. While negligible on CIFAR and ImageNet-127, large-scale or multi-bridge variants may require more efficient implementations. Finally, our current formulation focuses on unimodal visual inputs. Extending GBC to domain-shifted or multimodal scenarios may require anchor-domain alignment or modality-specific bridge functions. We view these directions as promising opportunities for future research.

\end{document}